\setlist[enumerate]{leftmargin=.5in}
\setlist[itemize]{leftmargin=.5in}
\newtheorem{lemma}{Lemma}
\newtheorem{theorem}{Theorem}
\newcounter{aequation} 
\newenvironment{aequation}{%
  \[ 
  \tag{$\mathcal{P}_\k$}
  }{\]}
\newcommand{\tb}{\textbf}      
\renewcommand{\b}{\mathbf}     
\newcommand{\R}{\mathbb{R}}    
\newcommand{\Rnn}{\R^{\ge 0}}  
\newcommand{\N}{\mathbb{N}}    
\newcommand{\X}{\mathscr{X}}   
\renewcommand{\d}{\mathrm{d}}  
\newcommand{\E}{\mathbb{E}}    
\renewcommand{\P}{\mathbb{P}}  
\newcommand{\Q}{\mathbb{Q}}    
\newcommand{\W}{\mathscr{W}}   
\newcommand{\I}{\mathbb{I}}    
\newcommand{\Mp}[1]{\mathscr{M}^+_1\left(#1\right)} 
\renewcommand{\k}{k} 
\newcommand{\kk}{k} 
\renewcommand{\H}{\mathscr{H}} 
\newcommand{\Hk}{\H_\k} 
\newcommand{\Bk}{\mathscr{B}_\k} 
\DeclareMathOperator{\MMDo}{MMD}  
    \newcommand\tabnode[1]{\begin{tabular}{c}#1\end{tabular}}
\newcommand{\MMD}[1]{\MMDo_\k\left(#1\right)}     
\newcommand{\MMDs}[1]{\MMDo_\k^2\left(#1\right)}     
\newcommand{\MMDshatU}[1]{\widehat{\MMDo}_{\k,U}^2\left(#1\right)}     
\newcommand{\MMDshatV}[1]{\widehat{\MMDo}_{\k,V}^2\left(#1\right)}     
\newcommand{\MMDshatE}[1]{\widehat{\MMDo}_{\k,e,U}^2\left(#1\right)}     
\newcommand{\MMDshatEU}[1]{\widehat{\MMDo}_{\k,e,U}^2\left(#1\right)}     
\newcommand{\MMDshatEV}[1]{\widehat{\MMDo}_{\k,e,V}^2\left(#1\right)}     
\newcommand{\MMDhatEV}[1]{\widehat{\MMDo}_{\k,e,V}\left(#1\right)}     
\newcommand{\MMDshatUsymb}{\widehat{\MMDo}_{\k,U}^2}
\newcommand{\MMDshatVsymb}{\widehat{\MMDo}_{\k,V}^2}
\newcommand{\MMDshatEsymb}{\widehat{\MMDo}_{\k,e,U}^2}
\newcommand{\MMDshatEVsymb}{\widehat{\MMDo}_{\k,e,V}^2}
\newcommand{\WD}{\text{W}_p}    
\newcommand{\WDhat}{\widehat{\text{W}}_p}    
\newcommand{\iid}{\stackrel{\text{i.i.d.}}{\sim}} 
\renewcommand{\O}{\mathcal{O}} 
\renewcommand{\o}{o} 
\newcommand{\cN}{\mathcal{N}} 
\newcommand{\Nt}{\tilde{N}}
\newcommand{\mP}{\mathcal{P}} 
\newcommand{\cskew}{s} 
\newcommand{\cbs}{r } 
\newcommand{\ckg}{{c}} 
\newcommand\smallO{
  \mathchoice
    {{\scriptstyle\mathcal{O}}}
    {{\scriptstyle\mathcal{O}}}
    {{\scriptscriptstyle\mathcal{O}}}
    {\scalebox{.7}{$\scriptscriptstyle\mathcal{O}$}}
  }
   \newcommand{\bw}{\b{w}}        
   \newcommand{\br}{\b{r}}        
  \newcommand{\cF}{\mathcal{F}}        
\DeclareMathOperator*{\argmax}{arg\,max}
\DeclareMathOperator*{\argmin}{arg\,min}
\DeclareMathOperator*{\Diag}{Diag}
\newcommand{\Rpp}{\R^{>0}}         
\renewcommand{\t}{\text}
\begin{document}

%

%

\twocolumn[
\aistatstitle{Keep it Tighter -- A Story on Analytical Mean Embeddings}
\aistatsauthor{ Linda Chamakh \And Zolt{\'a}n Szab{\'o} }

\aistatsaddress{ Model Risk Governance and Review \\ JP Morgan  Chase \\ London, UK \And Department of Statistics \\ London School of Economics \\ London, UK }]

\begin{abstract}
Kernel techniques are among the most popular and flexible approaches in data science allowing to represent probability measures without loss of  information under mild conditions. The resulting mapping called mean embedding gives rise to a divergence measure referred to as maximum mean discrepancy (MMD) with existing quadratic-time estimators (w.r.t.\ the sample size) and known convergence properties for bounded kernels.
In this paper we focus on the problem of MMD estimation when the mean embedding of one of the underlying distributions is available analytically. Particularly, we consider distributions on the real line (motivated by financial applications) and prove tighter concentration for the proposed estimator under this semi-explicit setting; we also extend the result to the case of unbounded (exponential) kernel with minimax-optimal lower bounds. We demonstrate the efficiency of our approach beyond synthetic example in three real-world examples  relying on one-dimensional random variables: index replication and calibration on loss-given-default ratios and on S\&P 500 data.

\end{abstract}


\section{INTRODUCTION}

Kernel methods \citep{steinwart08support,paulsen16introduction} form one of the most powerful tools in machine learning and statistics with a wide range of successful applications. 
The impressive modelling power and flexibility of kernel techniques in capturing complex nonlinear relations originates from the richness of the 
underlying function class called reproducing kernel Hilbert space \citep[RKHS]{aronszajn50theory} associated  to a kernel.

Kernel functions can be used to capture the similarity of objects belonging to various domains including sequences \citep{kiraly19kernel}, sets \citep{haussler99convolution}, and graphs \citep{borgwardt20graph}. Having a notion of inner product realized by kernels, one can represent probability distributions on any kernel-endowed domain via mean embeddings \citep{berlinet04reproducing,smola07hilbert}, which specifically allows to quantify the divergence between distributions by considering the RKHS distance between their corresponding mean embeddings. The resulting (semi-)metric called  maximum mean discrepancy \citep[MMD]{smola07hilbert,gretton12kernel} forms one of the most popular divergence measures in machine learning; the equivalent \citep{sejdinovic13equivalence} notion in the statistic community is referred to as energy distance \citep{baringhaus04new,szekely04testing,szekely05new} or N-distance \citep{zinger92characterization,klebanov05ndistance}. 

The wide popularity of MMD stems from (i) the computational tractablity of its different estimators, (ii) the existence of closed-form expressions for MMD in case of certain kernel-distribution pairs, (iii) its theoretical guarantees facilitated by the underlying Hilbert structure of RKHSs including concentration properties for bounded kernels \citep{gretton12kernel} and (iv) MMD being a metric for characteristic kernels \citep{fukumizu08kernel,sriperumbudur10hilbert,szabo2018characteristic}. These favorable properties of MMD have given rise to various successful applications, including for instance 
two-sample testing  \citep{gretton12kernel,schrab22efficient,hagrass2024spectral}, independence \citep{gretton08kernel,deb2020measuring,albert22adaptive}, goodness-of-fit testing \citep{balasubramanian21optimality,baum22kernel}, and 
statistical inference \citep{briol19statistical,alquier2024universal}, among many others.

In statistical modelling, the problem of parametric estimation \citep{casella2024statistical}---which aims to find the optimal parametric distribution from a specified family, given a set of observations---is one of the most fundamental tasks. The problem can be tackled by minimizing a divergence (also referred to as a calibration metric)  between the target parametric distribution and the empirical distribution associated to the data, with MMD as a natural choice. In particular, p-value and acceptance region derived by \citet{gretton12kernel} can be used to assess the calibration quality and the adequacy of the chosen distribution family. Moreover, for certain kernels and parametric distributions, the mean embedding can be computed analytically \citep[Table 1]{briol2019probabilistic}. Our primary aim in this paper from theoretical perspective is to understand the benefits of such analytical knowledge (when available): we propose a new semi-explicit (one-sample) MMD estimator and prove its tightened convergence guarantees compared to its classical two-sample counterpart.

From practical angle,  in finance, parametric distributions are widely applied for modelling, simulation and interpretation purposes. Common distributions arising in finance include (i) the beta distribution with a bounded support in $(0,1)$ which is particularly well-suited to model financial ratios such as loan recovery rates \citep{chen2013curve} and (ii) the Gaussian distribution which is often used to model the distribution of the log-returns of stock values starting from the seminal work of \citet{blackscholes1973}. Distributions with non-zero skewness (the normalised third moment of the distribution) are also relevant since financial returns can divert from the Gaussian distribution by exhibiting fat tails and negative skewness \citep{cont01empirical}. Relaxation of the Gaussian distribution in these directions, such as the skew-normal distribution, turned out  \citep{Adcock2015skewfinance} to play a key role in the area.

Two key financial applications with the aforementioned distributions and our motivation from practical point of view are as follows.

\begin{enumerate}[labelindent=0em,leftmargin=1.6em,topsep=0cm,partopsep=0cm,parsep=0cm,itemsep=1mm]   
\item The index replication problem consists of 
    finding the weighted average of individual stocks matching a return distribution \citep{Bamberg2000replication,roncalli2009tracking}. 
As proposed by 
 \cite{chalabi12portfolio,lassance19information}, the problem can be solved by  minimizing the divergence between the distribution of weighted stocks and  that of of index returns. When performing such replication, Gaussian distribution and its relaxations constitute a natural choice for the distribution of returns \citep{blackscholes1973,cont01empirical}.
    \item The modelling of  the loss-given-default (LGD; which represents the percentage of the loan the client or company is not able to repay given he has defaulted) can be assumed to follow a beta distribution, as advised by the financial agency Moody's in their widely-used recovery model methodology \citep{gupton2002losscalctm}. 
\end{enumerate}
In both of these applications and throughout the paper we focus on distributions on the real line. In addition we note that further parametric estimation problems in finance arise with  processes driven by a stochastic differential equation  \citep{bishwal2007parameter}, quantile estimation, and tail dependence modelling \citep{jadhav2009parametric,fortin2002tail}. 

Our \tb{contributions} can be summarized are as follows.
\begin{enumerate}
\item We propose the semi-explicit MMD estimator (relying on analytical mean embedding when available), prove its tighter concentration properties (Theorem~\ref{thm:concentration_explicit}) compared to its two-sample counterpart (established for bounded kernels) and extend the analysis to the unbounded exponential kernel (Theorem~\ref{prop:concentration_exponentiated}) with matching minimax lower bound (Theorem~\ref{th:minimax_unbounded}).

\item Accompanying our tighter concentration analysis, we derive the analytical mean embedding for  new kernel-distribution pairs motivated by financial applications, covering the (Gaussian exponentiated, Gaussian) and (Mat{\'e}rn, beta) pairs.
    \item  We demonstrate the efficiency of our MMD estimator in three applications: a synthetic example,  index replication, and calibration on LGD ratios and on S\&P 500 data. 
\end{enumerate}
The paper is structured as follows. Notations are introduced in Section~\ref{sec:notations}. Section~\ref{sec:MMD-estimators} is dedicated to existing and the proposed semi-explicit MMD estimator. Our theoretical results are presented in 
Section~\ref{sec:results}. Numerical illustrations form the focus of Section~\ref{sec:experiments_finance}. Proofs are deferred to the supplement.

\section{NOTATIONS} \label{sec:notations}
This section is dedicated to definitions and to the introduction of our quantities of interest: mean embedding, maximum mean discrepancy, and our choice of studied kernel and distributions. We introduce the \tb{notations}: $\N$, $\N^*$, $[N]$, $\Rpp$, $\Rnn$, $\b v^{\top}$, $\Diag(\b{v})$, $a \wedge b$, $a\vee b$, $\I_{A}$, $L_{(s)}$, $\O(\cdot)$, $\o(\cdot)$, $\O_{a.s}(\cdot)$, $\o_{a.s}(\cdot)$, $\W^d$, $\zeta_\Q$, $\Phi$, $B$, $\Hk$, $\varphi_k$, $\Bk$, $\Mp{\X}$, $\mu_\k$, $\MMDo_k$.

Natural numbers are denoted by $\N=\{0,1,\ldots\}$; $\N^* = \{1,2,\ldots\}$ stands for the set of positive integers. For $N\in \N^{*}$, $[N]=\{1,\ldots,N\}$. Positive reals are denoted by $\Rpp$; $\Rnn$ stands for non-negative reals. The transpose of a vector $\b  v\in\R^d$ is denoted by $\b v^{\top}$; the diagonal matrix formed of a vector $\b v\in\R^d$ is given by  $\Diag(\b{v})\in\R^{d\times d}$.
The minimum of two numbers $a,b\in \R$ is denoted by $a \wedge b$; their maximum is $ a\vee b$. For a set $A$, $\I_{A}$ is the indicator of $A$: $\I_{A}(x)=1$ if $x\in A$; $\I_{A}(x)=0$ otherwise. Given $(L_s)_{s\in [S]} \subset \R$, the associated order statistics are $L_{(1)}\le \ldots \le L_{(S)}$. 
 The notation $b_n = \O(a_n)$ (resp.\ $b_n = \o(a_n)$) means that $(\frac{b_n}{a_n})_{n\in \N}$ is bounded (resp.\ $\lim_{n\rightarrow \infty}\frac{b_n}{a_n}=0$). For random variables $X_n = \O_{a.s}(a_n)$ (resp.\ $X_n = \o_{a.s}(a_n)$) means that $\left(\frac{X_n}{a_n}\right)_{n\in \N}$ is bounded (resp.\ converges to zero) almost surely. 
The $(d-1)$-dimensional simplex is 
$\W^d = \left\{\bw \in \left(\Rnn\right)^d\,:\, \sum_{j=1}^d w_j = 1\right\}. $
Let $m_\Q$ and $\sigma_\Q$ denote the expectation and the standard deviation of a real-valued random variable with distribution $\Q$; its skewness is defined as the standardized third moment $\zeta_\Q = \mathbb{E}_{x \sim \Q}\left[\left({  {(x-m_\Q) }/{\sigma_\Q }}\right)^{3}\right]$. 
The cumulative density function (cdf) of the standard normal distribution is $\Phi$; $\Phi(x) = \int_{-\infty}^x e^{-\frac{t^2}{2} } \d t$ ($x\in \R$). 
The beta function for $\alpha,\beta \in \Rpp$ is defined as  $B(\alpha ,\beta )=\int_{0}^1 t^{\alpha-1} (1-t)^{\beta-1}\d t$. 

In the sequel, let $\X$ denote a (non-empty) subset of the real line ($\X \subseteq \R$).
A function $\k:\X \times \X \rightarrow \R$ is called kernel if there exists a feature map $\varphi$ from $\X$ to a Hilbert space $\H$ such that $\k(x,y) = \left<\varphi(x),\varphi(y)\right>_{\H}$ for all $x, y\in \X$. While the feature map and the Hilbert space might not be unique, there always exists a unique reproducing kernel Hilbert space (RKHS) $\Hk$ associated to $\k$. $\Hk$ is the Hilbert space of $\X \rightarrow \R$ functions characterized by two properties: $\k(x,\cdot) \in \Hk$ ($\forall x \in \X$) and $f(x)=\left<f,\k(x,\cdot)\right>_{\Hk}$ ($\forall x \in \X$, $f\in\Hk$).\footnote{The shorthand $\k(\cdot, x)$ stands for the function $y \in \X \mapsto \k(y, x) \in \R$ while keeping $x \in \X$ fixed.} The first property describes the basic elements of $\Hk$, the second one is called the reproducing property; combining the two properties makes the canonical feature map and feature space explicit: $\k(x,y) = \left<\varphi_k(x), \varphi_k(y)\right>_{\Hk}$, where $\varphi_k(x) = \k(\cdot, x) \in \Hk$. The closed unit ball of $\Hk$ is denoted by $\Bk = \left\{f\in \Hk \, : \, \left\|f\right\|_{\Hk} \le 1\right\}$.

 Let $\Mp{\X} $ denote the set of Borel probability measures on $\X$. 
 For a given kernel $\k: \X \times \X \rightarrow \R$,  let   
    \begin{equation*} 
    \mu_\k(\P)  =\int_{\X} \k(\cdot,x)\d \P(x) \in \Hk
    \end{equation*}
     denote the mean embedding \citep{berlinet04reproducing,smola07hilbert} of the probability distribution $\P \in \Mp{\X}$; the integral is meant in Bochner sense. 
    The mean embedding is well-defined when $(\P, \k)$
    satisfies
    \begin{aequation}
        \E_{x \sim \P} \sqrt{\k(x,x)} < \infty.  \label{eq:P_k}
    \end{aequation}
  
     The maximum mean discrepancy (MMD) of two distributions $\P, \Q \in \Mp{\X}$  is a semi-metric defined by
     \begin{align}
          &\MMD{\P,\Q}=\left\|\mu_\k(\P) - \mu_\k(\Q) \right\|_{\Hk} \label{eq:MMD-in-terms-of-E} \\
         = &\sup_{f\in \Bk}\left[\E_{x\sim \P} f(x) - \E_{y\sim \Q} f(y)\right] \label{eq:MMD}\\
         =& \sqrt{\left\|\mu_K(\P)\right\|_{\H_K}^2 + \left\|\mu_K(\Q)\right\|_{\H_K}^2  -2 \left<\mu_K(\P),\mu_K(\Q)\right>_{\H_K}} \label{eq:MMDplug}\\
          =& \sqrt{\E_{\substack{x\sim \P,\\x'\sim \P}}\k(x,x') + \E_{\substack{y\sim \Q,\\y'\sim \Q} }\k(y,y') 
          - 2 \E_{\substack{x\sim \P,\\y\sim \Q} }\k(x,y)}, \label{eq:MMDsq}
     \end{align}
        where the second form ($\sup_{f\in \Bk}$) encodes that the discrepancy of two probability distributions is measured by their maximal mean discrepancy over $\Bk$. It also shows that MMD belongs to the class of integral probability metrics \citep{zolotarev83probability,muller97integral}. 
   MMD is well-defined when the pairs $(\P, \k)$ and $(\Q, \k)$ satisfy \eqref{eq:P_k}; this automatically holds for bounded kernels ($\sup_{x\in \X} k(x,x) <\infty$). MMD is a metric if and only if the kernel is characteristic \citep{fukumizu08kernel,sriperumbudur10hilbert}; examples of characteristic kernels include the Gaussian, Laplacian, Mat{\'e}rn, inverse multiquadrics or the B-spline kernel.
             

\section{MMD ESTIMATORS} \label{sec:MMD-estimators}

In this section, we recall existing two-sample MMD estimators in Section~\ref{ssec:classical_mmd}, and we present our proposed semi-explicit ones in Section~\ref{ssec:semi_exp_mmd}. Our motivation for the new estimators is two-fold: (i) to reduce the computational time
, and (ii) to achieve tighter concentration.

\subsection{Classical MMD Estimator}\label{ssec:classical_mmd}


Given i.i.d.\ (independent identically distributed) samples $\{ x_i \}_{i \in [N]}\sim \P$ and $\{ y_i \}_{i \in [M]}\sim \Q$ from the probability measures $\P,\Q \in \Mp{\X}$, one can estimate the squared MMD by using the unbiased U-statistics or the plug-in V-statistics   as
        \begin{align}
           & \MMDshatU{\P_N,\Q_M} = \frac{1}{N(N-1)}\sum_{\substack{i,j\in [N]\\ i\ne j}}\k\left(x_i,x_j\right) \label{eq:MMD-hat-gen:U} \\
            + &\frac{1}{M(M-1)}\sum_{\substack{i,j\in [M]\\ i\ne j}}\k\left(y_i,y_j\right)
            - \frac{2}{NM}  \sum_{\substack{i\in [N]\\ j\in [M]}} \k\left(x_i,y_j\right), \\
            & \MMDshatV{\P_N,\Q_M}  
            = \frac{1}{N^2}\sum_{i,j\in [N]}\k\left(x_i,x_j\right) \label{eq:MMD-hat-gen:V} \\
            + &\frac{1}{M^2}\sum_{i,j\in [M]}\k\left(y_i,y_j\right) - \frac{2}{NM}\sum_{i\in [N]}\sum_{j\in [M]} \k\left(x_i,y_j\right),
        \end{align}
        where $\P_N = \frac{1}{N}  \sum_{n\in [N]} \delta_{x_n}$ and $\Q_M = \frac{1}{M}  \sum_{m\in [M]} \delta_{y_m}$ denote the empirical measures. 
         The estimator 
          $\MMDshatU{\P_N,\Q_M}$ is unbiased, $\MMDshatV{\P_N,\Q_M}$ is non-negative, hence they have complementary advantages; both estimators  have computational complexity $\O\left((N+M)^2\right)$.

\subsection{Proposed Semi-Explicit MMD Estimator}\label{ssec:semi_exp_mmd}

    If the mean embedding $\mu_K(\Q) =  \E_{y\sim \Q}\k(\cdot,y) $ can be computed analytically, one can alternatively estimate the squared MMD using the plugin idea 
    of \eqref{eq:MMD-hat-gen:U}, or that  of \eqref{eq:MMD-hat-gen:V} as 
      \begin{align}
       &   \MMDshatE{\P_N,\Q}= \frac{1}{N(N-1)}\sum_{\substack{i,j\in [N]\\ i\ne j}}\k\left(x_i,x_j\right) \\
        & +\E_{y \sim \Q}  \mu_\k(\Q)(y) -  \frac{2}{N}\sum_{\substack{i\in [N]}} \mu_\k(\Q)(x_i), \label{eq:MMD-hat-gen:exp}\\
         & \MMDshatEV{\P_N,\Q} = \frac{1}{N^2}\sum_{{i,j\in [N]}}\k\left(x_i,x_j\right) \\
         &+  \E_{y \sim \Q}  \mu_\k(\Q)(y)
         -  \frac{2}{N}\sum_{\substack{i\in [N]}} \mu_\k(\Q)(x_i).\label{eq:MMD-hat-gen0:exp} 
     \end{align}
     We will refer to \eqref{eq:MMD-hat-gen:exp} and \eqref{eq:MMD-hat-gen0:exp}  as the semi-explicit MMD estimators; both have computational complexity $\O\left(N^2\right)$. 
     

\section{RESULTS} \label{sec:results}

In this section, we show the theoretical advantage of using explicit mean embedding when available. 
In Section~\ref{ssec:concentration}  we prove tightened concentration results for our semi-explicit MMD estimators,  and extend the analysis to unbounded kernels, with matching minimax lower bounds. We  summarize in Section~\ref{ssec:analytical_meanemb} the kernel-distribution pairs for which we 
derived analytical mean embeddings, extending the current literature. 

\subsection{Concentration of Semi-Explicit MMD} \label{ssec:concentration}
In this section, we show that explicit mean embedding, in case of both bounded and unbounded kernels, leads to better concentration properties of the MMD estimator.   
We start by recalling the concentration of the classical U-statistic based MMD estimator for bounded kernels (Theorem~\ref{th:gretton_concentration}), followed by presenting our tighter result  for the semi-explicit MMD (Theorem~\ref{thm:concentration_explicit}), which we also extend to the unbounded exponential kernel (Theorem~\ref{prop:concentration_exponentiated}) with matching minimax lower bound (Theorem~\ref{th:minimax_unbounded}).

\begin{theorem}[MMD concentration - bounded kernel]
\label{th:gretton_concentration}
Assume that $0\leq \k(x,x') \leq B$ for all $x, x'\in \X$, and let $\epsilon>0$. Then
\begin{align*}
\P\left(\MMDshatU{\P,\Q}-\MMDs{\P,\Q}>\varepsilon\right) \leq 
 e^{-\frac{ \left\lfloor \frac{N}{2} \right\rfloor \varepsilon^2}{8 B^2 } }.
 \end{align*}
The same bound holds for the deviation of $-\varepsilon$ below.
\end{theorem}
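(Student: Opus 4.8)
The plan is to read $\MMDshatU{\P,\Q}$ as a degree-two U-statistic and reduce the tail bound to the scalar Hoeffding inequality via Hoeffding's permutation decomposition. In the equal-sample-size setting of \cite[Theorem~10]{gretton12kernel} we are given i.i.d.\ $\{x_i\}_{i\in[N]}\sim\P$ and, independently, i.i.d.\ $\{y_i\}_{i\in[N]}\sim\Q$; put $z_i=(x_i,y_i)$, which are i.i.d., and
\[
 \MMDshatU{\P,\Q}=\frac{1}{N(N-1)}\sum_{\substack{i,j\in[N]\\ i\ne j}}h(z_i,z_j),\qquad h\big((x,y),(x',y')\big)=\k(x,x')+\k(y,y')-\k(x,y')-\k(x',y).
\]
Here $h$ is symmetric and, by \eqref{eq:MMD-in-terms-of-E}, unbiased: $\E\,h(z,z')=\MMDs{\P,\Q}$ for independent $z,z'$. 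Moreover, since $0\le\k\le B$ pointwise, the two positive summands of $h$ lie in $[0,B]$ and the two negative ones in $[-B,0]$, so $h$ takes values in an interval of length $4B$. It therefore suffices to control the upper deviation of this bounded U-statistic from its mean.

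The key step is Hoeffding's representation: for any sample of size $N$,
\[
 \frac{1}{N(N-1)}\sum_{i\ne j}h(z_i,z_j)=\frac{1}{N!}\sum_{\pi}\,W_\pi,\qquad W_\pi:=\frac{1}{\ell}\sum_{r=1}^{\ell}h\big(z_{\pi(2r-1)},z_{\pi(2r)}\big),\quad \ell:=\left\lfloor \tfrac{N}{2}\right\rfloor,
\]
the sum running over all $N!$ orderings $\pi$ of $[N]$, where for each fixed $\pi$ the average $W_\pi$ is a mean of $\ell$ \emph{independent}, identically distributed, $[-2B,2B]$-valued random variables with common expectation $\MMDs{\P,\Q}$. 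Writing $g:=\MMDshatU{\P,\Q}-\MMDs{\P,\Q}$, convexity of $t\mapsto e^{st}$ for $s>0$ and Jensen's inequality give
\[
 \E e^{sg}\le\frac{1}{N!}\sum_{\pi}\E e^{s(W_\pi-\MMDs{\P,\Q})}\le e^{2s^2B^2/\ell},
\]
where the last bound is Hoeffding's lemma applied to the $\ell$ independent terms making up each $W_\pi$ (range $4B$, hence $\E e^{s(W_\pi-\MMDs{\P,\Q})}\le e^{s^2(4B)^2/(8\ell)}$). A Chernoff bound optimized at $s=\ell\varepsilon/(4B^2)$ yields $\P(g>\varepsilon)\le e^{-\ell\varepsilon^2/(8B^2)}=e^{-\lfloor N/2\rfloor\varepsilon^2/(8B^2)}$, which is the claimed inequality; replacing $h$ by $-h$ (again of range $4B$) produces the matching lower-tail bound.

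The argument is essentially bookkeeping once the U-statistic structure is identified, so there is no serious obstacle; the points needing attention are invoking Hoeffding's decomposition correctly so that the exponent carries $\lfloor N/2\rfloor$ (rather than a cruder $N/4$) and that coincident ($i=j$) cross-index terms never intrude, and tracking the constant from the range $4B$ to the factor $8$ in the denominator. An alternative route avoiding the decomposition is to regard $\MMDshatU{\P,\Q}$ as a function of the $2N$ independent coordinates $x_1,\dots,x_N,y_1,\dots,y_N$, verify that changing any single coordinate perturbs it by at most $4B/N$, and apply McDiarmid's bounded-differences inequality; this gives a bound of the same form.
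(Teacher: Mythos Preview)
Your proof is correct. The paper does not supply its own proof of this statement---it is quoted as an external result from \cite[Theorem~10]{gretton12kernel}---but your argument is precisely the standard one: identify $\MMDshatU{\P,\Q}$ as a degree-two U-statistic in the paired samples $z_i=(x_i,y_i)$ with symmetric kernel $h$ of range $4B$, and apply Hoeffding's permutation decomposition together with Hoeffding's lemma. This is exactly the content of Theorem~\ref{th:Hoeffding_thm} (the Hoeffding inequality for U-statistics) specialized to $m=2$, $b-a=4B$, which the paper records and uses for the proof of the adjacent Theorem~\ref{thm:concentration_explicit}. Your bookkeeping of the constant ($\lfloor N/2\rfloor$ in the exponent, $8B^2$ in the denominator) and your remark on the equal-sample-size assumption are both accurate.
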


Using the analytical knowledge of $\mu_K(\Q)$ leads to tighter concentration  as it is shown by our next result. We recall that $\X \subseteq \R$ ($\X\ne \emptyset$) throughout the manuscript.

\begin{theorem}[Semi-explicit MMD concentration - bounded kernel]
\label{thm:concentration_explicit}
Assume that $A \leq \k(x,x') \leq B$ for all $x,x' \in \X$, and let $\epsilon>0$. Then 
 \begin{align*}
\P\left( \MMDshatE{\P,\Q} -\MMDs{\P,\Q}  >\varepsilon\right)
&\leq  e^{-\frac{  \left\lfloor \frac{N}{2} \right\rfloor \varepsilon^2}{  2 (B-A)^2 } } \\
&+  e^{-\frac{   {N}  \varepsilon^2}{  8 (B-A)^2 } }.
\end{align*}
The same bound holds for the deviation of $-\varepsilon$ below.
\end{theorem}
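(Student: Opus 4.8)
The plan is to decompose the deviation $\MMDshatEU{\P,\Q} - \MMDs{\P,\Q}$ into a degree-two U-statistic part and a linear (sample-average) part, and to control each separately with a Hoeffding-type bound. Write $x_1,\dots,x_N \iid \P$ for the underlying sample; since the target $\Q$ is fixed, $\mu_\k(\Q)$ is a deterministic function. Using the expansion \eqref{eq:MMD-in-terms-of-E} of $\MMDs{\P,\Q}$ together with the identities $\E_{y\sim\Q}\mu_\k(\Q)(y) = \E_{y\sim\Q}\E_{y'\sim\Q}\k(y,y')$ and $\E_{x\sim\P}\mu_\k(\Q)(x) = \E_{x\sim\P}\E_{y\sim\Q}\k(x,y)$, the ``$\Q$-only'' term of the estimator \eqref{eq:MMD-hat-gen:exp} cancels exactly the corresponding population term, and the cross term becomes a centered sample average, so that
\begin{align*}
\MMDshatEU{\P,\Q} - \MMDs{\P,\Q} = T_1 - 2 T_2,
\end{align*}
where $T_1 = \frac{1}{N(N-1)}\sum_{i\ne j}\k(x_i,x_j) - \E_{x\sim\P}\E_{x'\sim\P}\k(x,x')$ and $T_2 = \frac{1}{N}\sum_{i\in[N]}\mu_\k(\Q)(x_i) - \E_{x\sim\P}\mu_\k(\Q)(x)$.

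Next I would record the elementary but crucial boundedness fact: since $A \le \k(x,y) \le B$ for all $x,y$, integrating against $\Q$ gives $A \le \mu_\k(\Q)(x) \le B$ for every $x$. Hence the kernel defining $T_1$ and the summands of $T_2$ all take values in an interval of length $B-A$. A union bound then gives $\{T_1 - 2T_2 > \varepsilon\} \subseteq \{T_1 > \varepsilon/2\} \cup \{T_2 < -\varepsilon/4\}$, so it suffices to bound these two events.

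For $T_1$ I would invoke Hoeffding's inequality for U-statistics (the same tool underlying Theorem~\ref{th:gretton_concentration}): writing the order-two U-statistic as an average over permutations of averages of $\lfloor N/2\rfloor$ i.i.d.\ bounded blocks and using convexity of the exponential to reduce to the i.i.d.\ Hoeffding bound yields $\P(T_1 > \varepsilon/2) \le \exp\!\big(-2\lfloor N/2\rfloor (\varepsilon/2)^2/(B-A)^2\big) = \exp\!\big(-\lfloor N/2\rfloor \varepsilon^2/(2(B-A)^2)\big)$, the first term. For $T_2$, which is an average of $N$ i.i.d.\ variables in an interval of length $B-A$, ordinary Hoeffding gives $\P(T_2 < -\varepsilon/4) \le \exp\!\big(-2N(\varepsilon/4)^2/(B-A)^2\big) = \exp\!\big(-N\varepsilon^2/(8(B-A)^2)\big)$, the second term; summing proves the upper tail. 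The lower tail follows verbatim by applying the same argument to $-(T_1-2T_2)$, since both Hoeffding bounds are two-sided. The only real care needed is the bookkeeping of constants in the union-bound split — allocating $\varepsilon/2$ to $T_1$ and $\varepsilon/4$ to $T_2$ (after dividing out the factor $2$) so the exponents come out exactly as stated — and invoking the correct U-statistic form of Hoeffding's inequality; there is no genuine analytic obstacle, the work being to make the cancellation in the decomposition and the constant tracking precise.
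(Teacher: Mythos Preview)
Your proposal is correct and follows essentially the same route as the paper: decompose the deviation into a centered order-two U-statistic and a centered sample average of $\mu_\k(\Q)(x_i)$, observe both underlying functions are bounded in $[A,B]$, split $\varepsilon$ as $\varepsilon/2$ and $\varepsilon/4$ via a union bound, and apply Hoeffding's inequality for U-statistics to each piece. The constant bookkeeping and the lower-tail argument match the paper exactly.
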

\noindent\tb{Remarks}: 
\begin{itemize}[labelindent=0em,leftmargin=1em,topsep=0em,partopsep=0cm,parsep=0cm,itemsep=2mm]
     \item 
        The proof of Theorem~\ref{thm:concentration_explicit} relies on rewriting the difference $\MMDshatE{\P,\Q} -\MMDs{\P,\Q}$ as a sum of two U-statistics of different orders, followed by applying twice the Hoeffding inequality for U-statistics and union bounding. 
     \item 
     Specializing Theorem~\ref{thm:concentration_explicit} to $A=0$ and comparing its concentration result with that in Theorem~\ref{th:gretton_concentration} for $\MMDshatU{ \P,\Q}$, we gain in terms of constant in front of $N \varepsilon^2$ in the exponent: we have $\frac{1}{4B^2}$ and $\frac{1}{8B^2}$ instead of $\frac{1}{16B^2}$. This means that the estimator using the analytical knowledge of $\mu_K(\Q)$ brings a factor of $2$ improvement in the \emph{exponent}.
\end{itemize}

In Theorem~\ref{thm:concentration_explicit} the deviation of the estimator $\MMDshatE{\P,\Q}$ was captured for bounded kernels. Our next theorem extends the result to the unbounded exponential kernel, a subcase of the Gaussian-exponentiated  kernel  $(x,y)\mapsto e^{-a(x-y)^2 + b xy}$ with $a=0$. 
\begin{theorem}[Semi-explicit MMD concentration - exponential kernel]
\label{prop:concentration_exponentiated}
Let us consider the exponential kernel $\k(x,y) = e^{b x y}$ ($b>0$, $x,y\in \R$) with probability measures $\P, \Q \in \Mp{\R}$ satisfying 
\begin{align}
\label{eq:unbounded_as}
{ \E_{x \sim \P} e^{ \lambda x^2} < \infty , \quad \E_{x \sim \Q} e^{ \lambda x^2} < \infty \quad \forall \lambda \in \R.}
\end{align}
Let the number of samples $N$ taken from $\P$ be even.  Then for any $p\ge 2$,  there exists a universal constant $C=C_{p,\P,K} >0$ such that for any $\varepsilon>0$
\begin{align*}
\P\left( \MMDshatE{\P,\Q} -\MMDs{\P,\Q}  >\varepsilon\right)
&\leq  \frac{C}{ \varepsilon^p N^{p/2}}.
\end{align*}
The same bound holds for the deviation of $-\varepsilon$ below.
\end{theorem}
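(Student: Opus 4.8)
The plan is to reduce the deviation $\MMDshatEU{\P,\Q}-\MMDs{\P,\Q}$ to the sum of a centered U-statistic and a centered empirical mean, both functions of the $\P$-sample, bound their $L^p$ norms by $C\,N^{-1/2}$, and invoke Markov. Since $\mu_\k(\Q)(x)=\E_{y\sim\Q}\k(x,y)$, we have $\E_{y\sim\Q}\mu_\k(\Q)(y)=\E_{y,y'\sim\Q}\k(y,y')$ and $\E_{x\sim\P}\mu_\k(\Q)(x)=\E_{x\sim\P}\E_{y\sim\Q}\k(x,y)$, so comparing $\MMDshatEU{\P,\Q}$ in \eqref{eq:MMD-hat-gen:exp} with the expectation form \eqref{eq:MMD-in-terms-of-E} of $\MMDs{\P,\Q}$ the $\Q$-$\Q$ term cancels exactly and the cross term carries no bias, leaving
\[
\MMDshatEU{\P,\Q}-\MMDs{\P,\Q}=\bigl(U_N-\E U_N\bigr)-2\bigl(V_N-\E V_N\bigr),
\]
where $U_N=\tfrac{1}{N(N-1)}\sum_{i\ne j}\k(x_i,x_j)$ is the U-statistic with the symmetric kernel $\k(x,y)=e^{bxy}$ over the i.i.d.\ sample $\{x_i\}_{i\in[N]}\sim\P$ and $V_N=\tfrac1N\sum_{i\in[N]}\mu_\k(\Q)(x_i)$. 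By Markov and the triangle inequality in $L^p$, it then suffices to bound $\|U_N-\E U_N\|_p$ and $\|V_N-\E V_N\|_p$ separately.

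For $V_N$, an average of i.i.d.\ centered variables, I would apply the Marcinkiewicz--Zygmund (equivalently Rosenthal) inequality to get $\|V_N-\E V_N\|_p\le C_p N^{-1/2}\|\mu_\k(\Q)(x_1)\|_p$. For $U_N$ I would use the blocking representation of a U-statistic --- the same device used in the proof of Theorem~\ref{th:gretton_concentration} for bounded kernels: with $N$ even, $U_N$ is an average over permutations of averages of $N/2$ i.i.d.\ copies of $\k(x_1,x_2)$, so Jensen's inequality followed by Marcinkiewicz--Zygmund gives $\|U_N-\E U_N\|_p\le C_p N^{-1/2}\|\k(x_1,x_2)\|_p$. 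In other words, Markov plus Marcinkiewicz--Zygmund play the role that Hoeffding's bounded-difference inequality plays for bounded kernels, and everything reduces to checking that $\|\mu_\k(\Q)(x_1)\|_p$ and $\|\k(x_1,x_2)\|_p=\bigl(\E e^{pb x_1x_2}\bigr)^{1/p}$ are finite.

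This last point is where the unboundedness must be tamed, and I would do it with Young's inequality. For every $t>0$, $bxy\le\tfrac{tb^2}{2}x^2+\tfrac{1}{2t}y^2$, hence $0<e^{bxy}\le e^{tb^2x^2/2}e^{y^2/(2t)}$; integrating in $y$ against $\Q$ yields $\mu_\k(\Q)(x)\le\bigl(\E_{y\sim\Q}e^{y^2/(2t)}\bigr)e^{tb^2x^2/2}$, whose prefactor is finite by \eqref{eq:unbounded_as}, so $\E_{x\sim\P}\mu_\k(\Q)(x)^p<\infty$, again by \eqref{eq:unbounded_as}. Similarly $pbx_1x_2\le\tfrac{c}{2}x_1^2+\tfrac{p^2b^2}{2c}x_2^2$ for any $c>0$, so by independence $\E e^{pbx_1x_2}\le\bigl(\E_{x\sim\P}e^{cx^2/2}\bigr)\bigl(\E_{x\sim\P}e^{p^2b^2x^2/(2c)}\bigr)<\infty$. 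Combining, $\bigl\|U_N-\E U_N-2(V_N-\E V_N)\bigr\|_p\le C\,N^{-1/2}$ for a finite constant $C$ depending only on $p$, $b$ and the exponential moments of $\P$ and $\Q$, whence by Markov $\P(\,\cdot>\varepsilon\,)\le C^p\varepsilon^{-p}N^{-p/2}$; the lower deviation is handled identically by symmetry.

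I expect the main obstacle to be purely the bookkeeping of the moment step: confirming that each exponential moment invoked is among those guaranteed by \eqref{eq:unbounded_as} (all this needs is the freedom to choose $t$ and $c$). The rest is standard --- the reduction of a U-statistic to i.i.d.\ block averages plus a moment inequality for sums of independent variables --- with the one caveat that the blocking identity needs $N$ even, so that $N/2$ is an integer and the blocks are genuine i.i.d.\ averages, exactly as the statement assumes. If a ready-made $L^p$ inequality for non-degenerate U-statistics is available in the appendix, it could be substituted for the blocking step.
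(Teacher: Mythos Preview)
Your proposal is correct and follows essentially the same route as the paper's proof: the same decomposition into a centered U-statistic plus a centered empirical mean, Hoeffding's permutation (blocking) representation of the U-statistic combined with Jensen, a moment inequality for i.i.d.\ sums, Markov, and the bound $bxy\le \tfrac{1}{2}(c_1 x^2+c_2 y^2)$ to verify the required exponential moments from \eqref{eq:unbounded_as}. The only cosmetic differences are that the paper invokes Burkholder's inequality (Theorem~\ref{th:burkholder}) where you cite Marcinkiewicz--Zygmund --- these coincide for sums of independent centered variables --- and the paper combines the two pieces via a union bound at the probability level rather than the triangle inequality in $L^p$.
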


\noindent\tb{Remarks:}
\begin{itemize}[labelindent=0em,leftmargin=1em,topsep=0em,partopsep=0cm,parsep=0cm,itemsep=2mm]
\item The proof of Theorem~\ref{prop:concentration_exponentiated} relies on combining concentration results for U-statistics and martingales. One could use similar ideas to cover the two-sample MMD estimator $\MMDshatU{\P,\Q}$ for the exponential kernel.

\item \textbf{Convergence rate of $\MMDshatE{\P,\Q}$:}  
 Theorem~\ref{thm:concentration_explicit} means a convergence rate $\O_{a.s} \left(\frac{1}{\sqrt{N}}\right)$ of the estimator $\MMDshatE{\P,\Q}$ for bounded kernels. Theorem~\ref{prop:concentration_exponentiated} implies the same convergence (when taking $\kappa\rightarrow 0$) for the unbounded exponential kernel. Indeed, for any $\kappa>0$,  one can find $p$ such that $\kappa p >2$. Taking  $\varepsilon_N  = \left(\frac{1}{\sqrt{N}}\right)^{1-\kappa}$ in the Borel-Cantelli lemma, using Theorem~\ref{prop:concentration_exponentiated} and that in this case $\frac{1}{\varepsilon_N^p N^{p/2}} = \frac{N^{\frac{1}{2}(1-\kappa)p}}{N^{p/2}} = N^{-\frac{\kappa p}{2}}$, one arrives at 
\begin{align*}
     &\sum_{N \in \N^*} \P\left( \MMDshatE{\P,\Q} -\MMDs{\P,\Q}  >\varepsilon_N \right) \\
     \leq &\sum_{N \in \N^*} \frac{C_p}{ N^{\frac{\kappa p}{2}}} < \infty.
\end{align*}
\item \textbf{ Convergence of $\MMDshatEV{\P,\Q}$: } Similar  rate can be proved for the V-statistics, by rewriting $\MMDshatEV{\P,\Q} -   \MMDs{\P,\Q}$ in terms of $\MMDshatE{\P,\Q} -   \MMDs{\P,\Q}$ (see the supplement).
\end{itemize}  

It is known \cite[Theorem 2]{tolstikhin16minimax} that the rate $\frac{1}{\sqrt{N}}$ for bounded continuous radial kernels in the two-sample setting for the class of probability measures is optimal with infinitely differentiable density. We prove that a similar result holds for the considered one-sample setting and unbounded exponential kernel. 

\begin{theorem}[Minimax rate for semi-explicit MMD, exponential kernel]
\label{th:minimax_unbounded} Let us consider the exponential kernel $\k(x,y) = e^{b x y}$ ($b>0$, $x,y\in\R$). 
Let $(\P, \k)$ and $(\Q, \k)$  satisfy \eqref{eq:P_k}, and let $m_\P$ and $m_\Q$ stand for the mean of $\P$ and $\Q$, respectively.
Then 
\begin{align*}
&\inf_{ \widehat{\MMDo}_N } \sup_{ \P,\Q \in \mathcal{P}} \P \left( \left| \widehat{\MMDo}_N -  \MMD{\P,\Q} \right| \ge \frac{c}{\sqrt{N}}   \right)\\
\ge& \max\left( \frac{e^{- \frac{a^2 b}{2} }}{4},   \frac{1-\sqrt{  \frac{a^2 b}{2}}}{2}  \right)
\end{align*}
for some finite constant $c>0$, $a=\sqrt{N}(m_\P-m_\Q)$, and  $\widehat{\MMDo}_N$ running over all the estimators using the samples $\{ x_n \}_{n \in [N] }$. 
\end{theorem}
\noindent\tb{Remarks}: 
\begin{itemize}[labelindent=0em,leftmargin=1em,topsep=0em,partopsep=0cm,parsep=0cm,itemsep=2mm]
     \item Theorem~\ref{th:minimax_unbounded} shows that $ \MMD{\P,\Q} $ with $\k$ being the exponential kernel cannot be estimated at a  rate faster than $\frac{1}{\sqrt{N}}$ by any $ \widehat{\text{MMD}}_N $ estimator for all $\P,\Q \in \mP_k$. The fact that the rate $\frac{1}{\sqrt{N}}$ is achievable was shown in Theorem~\ref{prop:concentration_exponentiated}.
    \item The proof relies on the Le Cam's method \citep{lecam73convergence,tsybakov08introduction}. The main technical difference and challenge which were resolved are that using the unbounded exponential kernel one requires a dedicated MMD computation, 
    and with this need the parameter dependence of MMD becomes somewhat intricate.
    \item The condition $\E_{x \sim \P} \sqrt{\k(x,x)} < \infty$ appearing in the definition of $\mP_k$ can only be milder than \eqref{eq:unbounded_as}, since the former is a specific case of \eqref{eq:unbounded_as} with $\lambda = \frac{b}{2}$. In fact, 
    \eqref{eq:unbounded_as} is more restrictive as it can be seen for instance for Gaussian distributions $\P=\mathcal{N}\left(m, \sigma^2\right)$. Indeed, in this case a standard calculation shows that  $\E_{x \sim \P} \sqrt{\k(x,x)}  = e^{\frac{m^2}{1-b\sigma^2}} \frac{1}{ \sqrt{1-\sigma^2 b}}$ which is finite (or equivalently $\P \in \mP_k$) iff $\sigma^2< 1/b$. However,  $\E_{x \sim \P} e^{\lambda x^2}\propto \frac{1}{\sqrt{1-2\lambda \sigma^2}}$ which is finite iff $\lambda < \frac{1}{2 \sigma^2}$; in other words the Gaussian distributions $\mathcal{N}\left(m, \sigma^2\right)$ do not obey \eqref{eq:unbounded_as}.
    
\end{itemize}



\subsection{Analytical Formulas for Mean Embedding}
\label{ssec:analytical_meanemb}
In Section~\ref{ssec:concentration} we showed that one can leverage with the semi-explicit MMD estimator \eqref{eq:MMD-hat-gen:exp} the analytical knowledge of mean embedding, and get tighter concentration. In this section, we provide a summary of our novel results on such closed-form expressions, accompanied with a discussion on existing results.
 
 Our novel analytical mean embedding results (summarized in Table~\ref{tab:results:mean_embedding}, available in Lemma~\ref{lemma:mean_emb_gaussian_gaussian_exp} and Lemma~\ref{lemma:mean_emb_beta_matern}---available in the supplement---and related to existing works below) are on the (Gaussian exponentiated, Gaussian) and (Mat{\'e}rn, beta) kernel-distribution pairs, and are motivated by financial applications. The studied Gaussian-exponentiated and the Mat{\'e}rn kernels generalize the widely-used Gaussian, Laplacian and exponential ones (Fig.~\ref{fig:relation-of-our-kernel-examples}), the beta distribution extends the uniform one  (Fig.~\ref{fig:relation-target-distributions}).
Kernels  ($k$) and distributions ($\Q$) are summarized in Table~\ref{tab:explicit_kernel} and Table~\ref{tab:explicit_density}, with their relations in Fig.~\ref{fig:relation-of-our-kernel-examples} and Fig.~\ref{fig:relation-target-distributions}.

\begin{table}
   \caption{Kernel definitions. Parameters: {$a, b\in \Rnn$}; $\sigma_0,\sigma, \ckg, \lambda \in \Rpp$; $p \in \N$.
   }
  \label{tab:explicit_kernel}
  \begin{center}
\begin{tabular}{@{}l@{\hspace{0.1cm}}l@{}}
    \toprule
       Kernel& $\k(x,y)$   \\
       \midrule
       Gaussian-exponentiated  & $ e^{-a(x-y)^2 + b xy}$  \\
       Mat{\'e}rn & $ \sigma_0^2 e^{-{\frac {{\sqrt {2p+1}}|x-y|}{\sigma }} }{\frac {p!}{(2p)!}} \times$ \\ 
       &  $ \sum _{i=0}^{p}{\frac {(p+i)!}{i!(p-i)!}}\left({\frac {2{\sqrt {2p+1}}|x-y|}{\sigma }}\right)^{p-i}$ \\
       Gaussian & $e^{-a{(x-y)^2}}$\\
      Laplacian & $e^{-\lambda|x-y|}$\\
         exponential  & $e^{bxy}$
       \\ \bottomrule
  \end{tabular}
  \end{center}
\end{table}

\begin{figure}
   \xymatrixcolsep{0.7cm} \xymatrixrowsep{1.2cm}
    \xymatrix{
    \t{Gaussian-exponentiated} \ar[d]_(0.45){a=0} \ar[dr]_(0.35){b=0}  & \t{Mat{\'e}rn} \ar[d]_(.25){p\rightarrow \infty, \, \sigma_0 = 1} \ar[dr]^(0.4){p=0, \, \sigma_0 = 1, \, \sigma= \frac{1}{\lambda}}\\
    \t{exponential} & \t{Gaussian} & \t{Laplacian}
    }
    \caption{Relation of the kernels in Table~\ref{tab:explicit_kernel}. \label{fig:relation-of-our-kernel-examples}}
\end{figure}

\begin{table}
\caption{Target distributions. $q$ stands for the pdf of $\Q$. Parameters: $\cskew, m\in \R$; $\alpha, \beta, v, \sigma \in \Rpp$.}
  \begin{center}
\begin{tabular}{@{}lll@{}}
    \toprule
       Distribution & $q(x)$ \\
       \midrule
       skew Gaussian & $ \frac{2}{\sqrt{2\pi v}} e^{-\frac{(x-m)^2}{2v}} \Phi \left( \frac{\cskew(x-m)}{ \sqrt{v}} \right)$ 
       \\
       Gaussian & $ \frac{1}{ \sqrt{2 \pi \sigma^2}} e^{- \frac{\left( x - m \right)^2}{ 2 \sigma^2 } }$ \\
       beta &  $\frac{1}{B(\alpha,\beta)} x^{\alpha -1}(1-x)^{\beta -1} \I_{[0,1]}(x) $ \\
       uniform & $\I_{[0,1]}(x)$\\
 \bottomrule
  \end{tabular}
  
  \label{tab:explicit_density}
  \end{center}
\end{table}

\begin{figure}
\begin{center}
\begin{tikzcd}[scale=1,row sep=1cm,column sep=8ex,labels=description]  
 \tabnode{skew Gaussian}  \arrow[d,"\tabnode{$s =0$, $v=\sigma^2$}"]
 &\tabnode{beta} \arrow[d,"\text{$\alpha=\beta=1$}"]  \\ 
 \tabnode{Gaussian} &  \tabnode{uniform} 
\end{tikzcd}
\end{center}\vspace{-1mm}
\caption{Relation of the distributions in Table~\ref{tab:explicit_density}. \label{fig:relation-target-distributions}}
\end{figure}

\begin{table}[h] 
 \caption{Summary of obtained analytical mean embedding $\mu_k(\Q)$ results for the considered kernel ($k$) and distribution ($\Q$) pairs.  \label{tab:results:mean_embedding}}
 \begin{center}
\begin{tabular}{llr}
    \toprule
 $\k$ & $\Q$&  $\mu_\k(\Q)$  \\
\midrule 
       Gaussian-exponentiated  & Gaussian& Lemma~\ref{lemma:mean_emb_gaussian_gaussian_exp} \\
       Mat{\'e}rn & beta & Lemma~\ref{lemma:mean_emb_beta_matern}\\
   \bottomrule
\end{tabular}
 \end{center}
 \end{table}

Regarding \tb{related literature}, the analytical expression for the mean embedding of the (Gaussian, Gaussian) kernel-distribution pair \citep{song08tailoring} can be recovered from Lemma~\ref{lemma:mean_emb_gaussian_gaussian_exp} by choosing $b=0$ in the Gaussian-exponentiated kernel, or by taking $\cskew=0$ and $v=\sigma^2$ in the (Gaussian, skew Gaussian) kernel-distribution result \citet[Section 9.2]{kennedy1998bayesian}. Our result on the (Mat{\'e}rn, beta) kernel-distribution pair gives back with $\alpha=\beta=1$ that by \citet[Section 5.4]{briol2019probabilistic} considering the (Mat{\'e}rn, uniform) pair.



\section{EXPERIMENTS}\label{sec:experiments_finance}

In this section, we demonstrate the efficiency of the proposed semi-explicit MMD estimator. 
Our optimization algorithm is presented in Section~\ref{sec:CEM}, followed by the used divergence metrics in Section~\ref{sec:baseline_estimators_cv}.  Our experimental results are presented in Section~\ref{ssec:div_conv}--\ref{sec:parametric-estimation}.

Specifically, we designed the following experiments:
\begin{itemize}[labelindent=0em,leftmargin=1.6em,topsep=0cm,partopsep=0cm,parsep=0cm,itemsep=1mm]
             \item \tb{Experiment 1}: We compare the speed of convergence of different divergence metrics to zero when $\P=\Q$ for Gaussian distributions (Section~\ref{ssec:div_conv}) setting the stage for index replication. 
             \item \tb{Experiment 2}: In Section~\ref{sec:index_replication}, we focus on the index replication problem, and aim to find the index weights matching a target distribution; we tackle the task by the minimization of various divergence measures. 
             
    \item \tb{Experiment 3}: 
     In Section~\ref{sec:parametric-estimation} we focus on MMD estimators 
     to perform the calibration of parametric distributions  on financial data, including LGD ratios and S\&P 500.\footnote{S\&P 500 is a widely used equity index calculated as the weighted-average value of the $500$ most highly capitalised US companies.}
\end{itemize}


\subsection{Optimization Algorithm}\label{sec:CEM} 

 To \tb{optimize} the divergence objectives we tailor the cross-entropy method (CEM; \citealt{rubinstein04cross}) to the task, to address the possible non-convexity of our objective functions.
 Particularly, the CEM technique is a zero-order optimization approach constructing a  sequence of pdfs $f\left(\cdot\,; \bm \theta^{(t)}\right)${---we considered Gaussian distributions---}which gradually concentrates around the optimum as $t\rightarrow \infty$. The idea of the CEM method is to generate samples, followed by adaptively updating $f\left(\cdot\,; \bm \theta^{(t)}\right)$ based on maximum likelihood estimate (MLE) relying on the top $\rho$-percent of the samples (elite in sense of the consider objective function), and smoothing; for further details of the algorithm the reader is referred to the supplement. 

 To deal with the constraints arising in our problems (non-negative and sum to one index weights, non-negative variance) one can apply a softmax transformation on the generated samples from $f\left(\cdot\,; \bm \theta^{(t)}\right)$.
 
%
 
\subsection{Baseline Divergence Estimators in Section~\ref{ssec:div_conv} and \ref{sec:index_replication}}
\label{sec:baseline_estimators_cv}

The performance of our semi-explicit MMD estimator \eqref{eq:MMD-hat-gen0:exp} was compared against the two-sample MMD estimator  \eqref{eq:MMD-hat-gen:V}, and contrasted with estimators of the Wasserstein distance and the Kullback-Leibler (KL) divergence, the latter two being classical baselines \citep[Chapter 5]{lassance19information}.
 
\paragraph{MMD} ($\MMDshatEVsymb$, $\MMDshatVsymb$). We consider  the exponential kernel $\k_{\t{exp}}$ and the Gaussian kernel $\k_{\t{G}}$, for which analytical mean-embeddings can be computed for the Gaussian  distribution (Table~\ref{tab:results:mean_embedding}).
The  $V$-statistic variant of the estimator was taken to guarantee non-negative estimates.

\paragraph{KL Divergence}($\widehat{D}_{\text{KL}}$). In the Gaussian setting, one can rely on the  KL  divergence for Gaussian distributions derived in \citet[page~13]{duchi07derivations}:
for $\P = \mathcal{N}(\mu, \sigma)$, $\Q = \mathcal{N}(m,s)$, which can be evaluated in $\O\left(N\right)$ by computing the empirical mean and variance of samples from $\P$ when $\Q$ is known,
\begin{align*}
D_{\text{KL}} \left(\P, \Q \right)   &= \log \left(\frac{s}{\sigma}\right) + \frac{\sigma^2 + (m-\mu)^2}{2 s^2} - \frac{1}{2}.
\end{align*}

\paragraph{Wasserstein Distance}($\WDhat$). Let $p\ge 1$. The Wasserstein distance \citep{peyre19computational} of the probability measures $\P, \Q \in \Mp{\R}$ is  defined as 
    \begin{align}
            \WD(\P ,\Q ) 
             & = \left( \int_0^1 \left| F_\P^{-1}(t) - F_\Q^{-1}(t) \right|^p \d t\right)^{1/p}\label{eq:WAD},
 \end{align}
 where 
 $F^{-1}_\P$ and $F^{-1}_\Q$ are the inverse cdfs of $\P$ and $\Q$, and $L^p([0,1])$ refers to the real-valued $p$-power Lebesgue-integrable functions on $[0,1]$. 
An empirical estimator for $\WD$ is, for $\{ x_i \}_{i \in [N]}\sim \P$: 
\begin{align}
 \WDhat(\P_{N},\Q) = \left[ \frac{1}{ N }\sum_{j \in \left[ N \right]} \left| x_{ \left( j \right)} -  F^{-1}_{\Q}\left( \frac{j}{N} \right) \right|^p  \right]^{\frac{1}{p}}.
\end{align}
This estimator can be evaluated in $\O\left(N\log(N)\right)$ time.

\subsection{Speed of Convergence}\label{ssec:div_conv}

In this section, we explore the speed of convergence of the divergence measures detailed in Section~\ref{sec:baseline_estimators_cv}.

When two probability distributions $\P$ and $\Q$ are equal, a desirable property of their  divergence estimator is to converge quickly towards zero; this is what we investigate next. Particularly, 
we assess the convergence of $D\left({\Q}_{N}, \Q\right)$, $\Q\sim \cN(m,s)$\footnote{The values of $m$ and $s$ were chosen as in the index replication setting (Section~\ref{sec:index_replication}): $m=0.042$ and $s=0.0719$.}, based on samples $\{x_i\}_{i\in[N]} \sim \Q$ for varying $N$. We took $b=10^{-3}$ for $\k_{\t{exp}}$, $c=2$ for $\k_{\t{G}}$ and for  $\WDhat$  we chose $p=1$ (but got similar results for other values of $p$). For the two-sample MMD estimator, we took $M=N$.  For each fixed sample size, we performed $100$ Monte Carlo simulations to assess the variability of the estimation. The obtained mean and std results are summarized in Fig.~\ref{fig:errorbar}. As it can be seen in the figure, the semi-explicit MMD estimator converges faster than the two-sample MMD one to $0$, with lower std. The exponential kernel provides lower values of the divergence. All divergence metrics show a convergence rate of $\O\left(1/\sqrt{N}\right)$ except for the Wasserstein metric whose slope is around $-0.3$ in log-log scale for the considered samples. 

\begin{figure*}[h]
\begin{center}
{\includegraphics[scale=0.4]{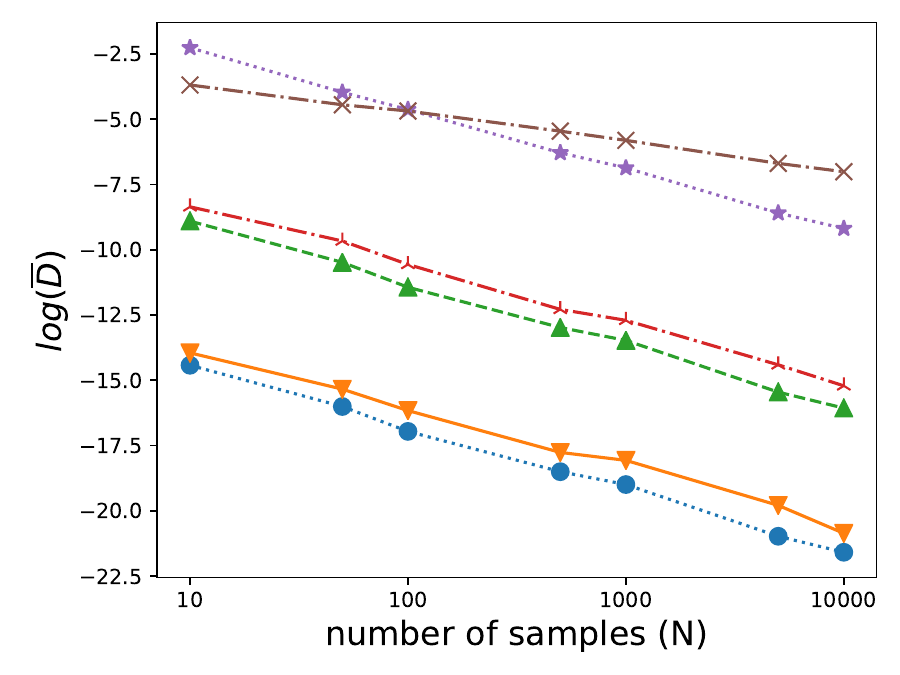}}
{\includegraphics[scale=0.4]{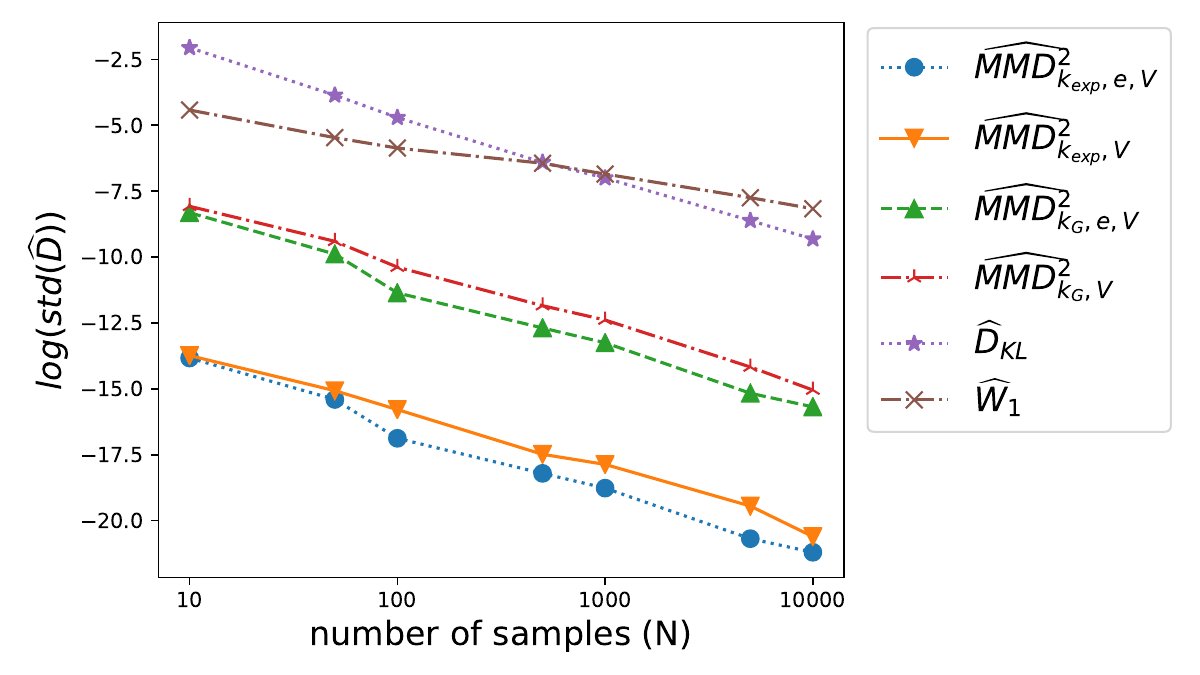}}
\caption{Mean $\pm$ std of various divergences when $\P = \Q$, on log-log scale as a function of number of sample $N$.}\label{fig:errorbar}
\end{center}
\end{figure*}

\subsection{Index Replication} \label{sec:index_replication}

In the index replication problem, the aim is to find the weights ($\b w$) for a basket of stocks based on the knowledge of the distributions of the stocks returns ($\b r$) and that of the index ($\P_T$). The problem can be formulated by  minimizing the discrepancy (measured in the sense of a divergence $D$) between the associated distributions $\bw^{\top} \br \sim \P_{\b w} $ and $\P_T$:
\begin{align}
     \bw ^* & = \argmin_{\bw \in \W^d} D\left(\P_\bw,\P_T\right),
     \label{eq:target-distr-approach}
\end{align}
where $\W^d$ encodes non-negative weights summing to one.

In this experiment, we designed a challenging low signal-to-noise setting to replicate what is observed in financial markets. Particularly, we worked with $d=3$ and assumed $\P_T=\mathcal{N}\left(m,s \right)$ with $m= \left(\bw^0\right)^\top \bm{\mu}$, $ s^2 =\left(\bw^0 \right)^\top [\Diag(\bm{\sigma})]^2 \left(\bw^0\right)$,  $\bw^0 = (0.7,0.2,0.1)$, $\bm{\mu}=(0.05,0.03,0.01)$ and $\bm{\sigma}=(0.1,0.08,0.05)$ and consequently $m=0.042$ and $s=0.0719$, with a std approximately twice the mean. 

We used  $N=1500$ (corresponding to $5$ years of data) to generate $\{\br\}_{i\in[N]} \sim \mathcal{N}\left(\bm{\mu} ,\Diag(\bm{\sigma})\right)$,  and applied the CEM algorithm to solve
\begin{align}
     \bw ^* & = \argmin_{\bw \in \W^d} D\left(\P_{\bw,N},\P_T\right),
\end{align}
which is the empirical counterpart of \eqref{eq:target-distr-approach}. 
We considered all the divergence measures and estimators detailed in Section~\ref{sec:baseline_estimators_cv}; for MMD only the semi-explicit estimator was taken due to its faster convergence experienced in Section~\ref{ssec:div_conv}.

Our results, summarized in Fig.~\ref{fig:errorbarw}, show the  convergence towards $\bw^0$ for the different estimators. In this example the performance of the Gaussian kernel based MMD and the KL divergence supersede that of the exponential kernel based MMD; the latter shows a larger variability which could be explained by the unbounded nature of the exponential  kernel.

\begin{figure*}[h]
\begin{center}
{\includegraphics[trim={0 0 5.7cm 0},clip,scale=0.4]{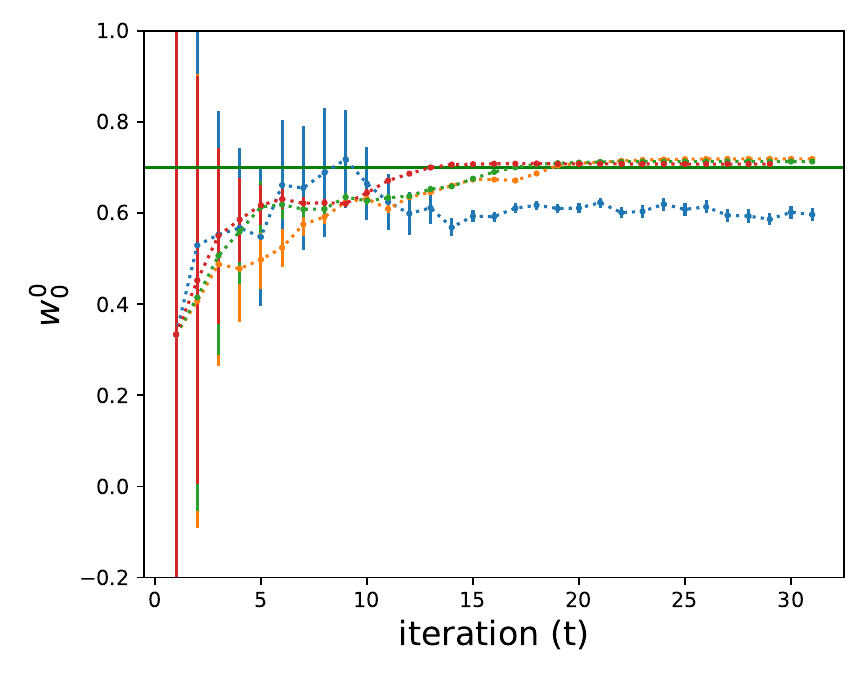}}
{\includegraphics[scale=0.4]{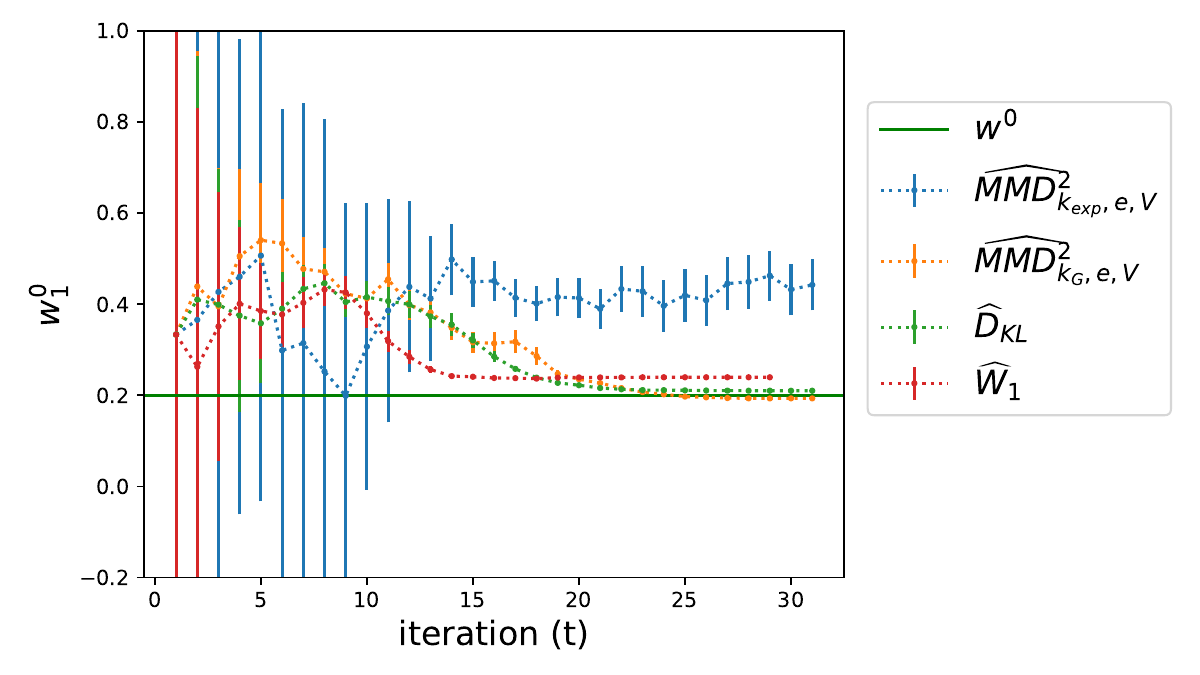}}
\caption{{Estimated weights (first two coordinates of $\b w^0$; $w^0_3 = 1- (w^0_1+w^0_2)$) in index replication as a function of number of iterations}.}\label{fig:errorbarw}
\end{center}
\end{figure*}

\subsection{Parametric Estimation on Financial Data} \label{sec:parametric-estimation}
Our two sub-experiments for parametric estimation on financial data using MMD were as follows.
\begin{enumerate}
    \item Calibration of beta distribution on a  dataset of historical LGD rates provided by a European bank.\footnote{The data is available at \href{http://www.creditriskanalytics.net/datasets-private2.html}{Credit Risk Analytics}.} It includes $N=2545$ observations on  LGDs. 
    \item Calibration on the S\&P 500\footnote{Data can be found on \href{https://finance.yahoo.com/}{Yahoo Finance}.} of a  
    Gaussian distribution, and of a skew Gaussian distribution.\footnote{Although  Gaussian is a special case of skew Gaussian, we will compare the goodness-of-fit in the two cases.} 
\end{enumerate}
In line with these calibration tasks we considered the parametric family of distributions $\{\Q_{\theta}\,:\, \theta\in \Theta\}$ as beta, Gaussian or skew Gaussian, and investigated the benefits of the semi-explicit MMD estimator compared to the two-sample one.

Particularly, given samples  $\{x_i\}_{i\in [N]}$ used for calibration and noticing that the terms in \eqref{eq:MMD-hat-gen:U} and \eqref{eq:MMD-hat-gen:exp}  containing $\k(x_i,x_j)$ are independent from $\theta$, 
the associated problems boil down to the optimization tasks:
\begin{align}
\theta^*&:=\argmin_{\theta \in \Theta} \MMDshatE{\P_N, \Q_{\theta} }  \\
&=\argmin_{\theta \in \Theta} \E_{y \sim  \Q_{\theta} }\left( \mu_{\k,\theta}(y) \right) -  \frac{2}{N}\sum_{\substack{i\in [N]}} \mu_{\k}(\Q_\theta)(x_i),\\
\theta^*&:=\argmin_{\theta \in \Theta}\ 
\MMDshatU{\P_N,\Q_{\theta,M}}\\
&= \argmin_{\theta \in \Theta} \frac{1}{M(M-1)}\sum_{\substack{i,j\in [M]\\ i\ne j}}\k\left(y_i,y_j\right) \\
 & \hspace{1.72cm}- \frac{2}{NM}\sum_{\substack{i\in [N] \\ j\in [M]}} \k\left(x_i,y_j\right).
\end{align}
where the dependence in the latter objective (two-sample MMD estimator) in $\theta$ is via the sample $\{y_i \}_{i\in[M]}\stackrel{\t{i.i.d.}}{\sim} \Q_{\theta}$.


We chose the following (kernel, distribution) pairs for which analytical mean-embedding  are available:
\begin{itemize}
    \item for the calibration of LGD ratios we selected $(\k_{\t{L}},\Q_{\t{b}}):=$ (Laplacian, beta),
    \item for S\&P 500 we used the pairs:
    $(\k_{\t{exp}},\Q_{\t{G}}):=$  (exponential, Gaussian),
    $(\k_{\t{G}},\Q_{\t{sG}}):=$  (Gaussian, skew Gaussian).
\end{itemize}

In Table~\ref{table:results} we report  the mean value and the standard-deviation of the objective function in the semi-explicit and two-sample case.  We also performed a test of distribution adequacy using the approximated p-value. 
\begin{table}[h]
\caption{Parametric estimation results (mean $\pm$ std) computed on the final elite samples of the CEM algorithm. The p-value is for the semi-explicit estimator.} \label{table:results}
\begin{center}
\begin{tabular}{@{}l@{\hspace{0.1cm}}r@{\hspace{0.1cm}}c@{\hspace{0.1cm}}r@{\hspace{0.1cm}}r@{\hspace{0.1cm}}c@{\hspace{0.1cm}}r@{\hspace{0.2cm}}c@{}}
    \toprule
$(\k,\Q)$& \multicolumn{3}{c}{$\MMDshatEsymb$}  & \multicolumn{3}{c}{$\MMDshatUsymb$}  &{$\hat{p}$} \\
\midrule
$(\k_{\t{L}},\Q_{\t{b}})$  & $-10^{-3}$ & $\pm$ & $5\cdot10^{-4}$ & $0.5$ & $\pm$ & $8\cdot 10^{-2}$ &$0.63$ \\
$(\k_{\t{exp}},\Q_{\t{G}})$ & $3 \cdot10^{-6}$ & $\pm$ & $8\cdot10^{-10}$ & $4\cdot10^{-5}$ & $\pm$ & $2\cdot10^{-6}$  &$0.66$ \\
$(\k_{\t{G}},\Q_{\t{sG}})$ & $5 \cdot 10^{-3}$ & $\pm$ & $2 \cdot 10^{-6}$ & $-2 \cdot 10^{-3}$ & $\pm$ & $7 \cdot 10^{-6}$  & $0.95$  \\
 \bottomrule
\end{tabular}
\end{center}
\end{table}

As it can be seen in Table~\ref{table:results}, the semi-explicit MMD consistently provides estimates with a significantly smaller std in the last iteration of the CEM algorithm. It is worth noting that the positive skewness on the last $5$ years of S\&P 500 returns is correctly captured, leading to a higher p-value with a skew Gaussian rather than with a Gaussian target.

\onecolumn
\appendix
\aistatstitle{Supplementary Material}

In the supplement, we provide the proofs our concentration (Section~\ref{sec:one}) and analytical mean embedding    (Section~\ref{sec:two}) results. External statements are collected in Section~\ref{sec:external-statements}. 
Further details about our experiments are given in Section~\ref{sec:CEMsupp}.

\section{PROOFS}\label{sec:one}



In this section,  we present the proofs of our concentration results for the semi-explicit MMD estimators. In Section~\ref{ssec:proof_conc_results}, we give the detailed proofs of our  tightened concentration results for the unbiased semi-explicit MMD estimator for bounded kernels (Theorem~\ref{thm:concentration_explicit}) and for  the unbounded exponential kernel  (Theorem~\ref{prop:concentration_exponentiated}), both leading to a $1/\sqrt{N}$ convergence rate. We follow by extending the convergence rate of the unbiased semi-explicit MMD to its V-statistics counterpart in Section~\ref{ssec:cv_VMMD}. We then (Section~\ref{ssec:MMD_minimax}) provide the proof of the optimality of the rate $1/\sqrt{N}$ in the unbounded case of exponential kernel (Theorem~\ref{th:minimax_unbounded}).

\subsection{Proofs of Concentration Results}\label{ssec:proof_conc_results}

\begin{proof}{(Theorem~\ref{thm:concentration_explicit}; concentration of $\MMDshatEU{\P,\Q}$, bounded kernel)}
By the definition of $\MMDshatEU{\P,\Q}$ and $\MMDs{\P,\Q} $, they have the term $\E_{y \sim \Q}  \mu_\kk(\Q)(y)$ in common, hence their difference writes as
 \begin{align}
 & \quad \MMDshatEU{\P,\Q} -\MMDs{\P,\Q} \\ 
  &\stackrel{(a)}{=} \underbrace{\frac{1}{N(N-1)} \sum_{\substack{i,j\in [N]\\ i\ne j}} \kk(x_i,x_j)}_{=:U_2} -  \underbrace{\E_{x, x'\sim \P}\kk(x,x')}_{\E U_2 }  -  2\Bigg[ \underbrace{\frac{1}{N} \sum_{i \in [N]} \mu_\kk(\Q)(x_i)}_{=:U_1} - \underbrace{\E_{x\sim \P} \mu_\kk(\Q)(x)}_{\E U_1} \Bigg] \nonumber\\
   & = U_2-\E U_2 - 2(U_1-\E U_1 )\label{eq:MMD_diff_U}
 \end{align}
by using in (a)  that $U_1$ and $U_2$ are U-statistics. The kernel of $U_1$ is $h_1(x) = \mu_\kk(\Q)(x)$, the kernel of $U_2$ is  $h_2(x,x')=k(x,x')$.  Since by assumption the kernel $k$ is lower bounded by $A$ and upper bounded by $B$, the same property holds for $h_1$ and $h_2$. Hence applying the Hoeffding bound for U-statistics (Theorem~\ref{th:Hoeffding_thm}), for any $t>0$
\begin{align}
  \P\left( U_1-\E{U_1} < - t \right) &=\P\left( (-U_1)-\E{(-U_1)} >  t \right)\le e^{-\frac{  2 N t^2}{  (B-A)^2 } },\\
     \P\left( U_2-\E{U_2} >  t \right) &\le e^{-\frac{  2\left\lfloor \frac{N}{2} \right\rfloor t^2}{ (B-A)^2 } }. \label{eq:U-stat-bounds}
\end{align}
Returning to our target quantity $\MMDshatEU{\P,\Q} -\MMDs{\P,\Q}$, for any $\varepsilon>0$
\begin{align}\hspace{-1cm}
 \left\{ \MMDshatEU{\P,\Q} -\MMDs{\P,\Q}  >\varepsilon \right\} 
      & \stackrel{\eqref{eq:MMD_diff_U}}{=} \left\{  U_2-\E{U_2} - 2(U_1-\E{U_1}) >\varepsilon  \right\}\nonumber\\
      &\stackrel{(a)}{\subseteq} \left\{ U_2-\E{U_2} >  \frac{\varepsilon}{2} \right\} \cup \left\{U_1-\E{U_1} < - \frac{\varepsilon}{4} \right\}, \label{eq:decomp}
\end{align}
where the inclusion $A \subseteq B \cup C$ in (a) is equivalent to $\bar{B} \cap \bar{C} \subseteq \bar{A}$; the latter holds as $\left\{ U_2-\E{U_2} \le  \frac{\varepsilon}{2} \right\} \cap \left\{- 2(U_1-\E{U_1}) \le  \frac{\varepsilon}{2} \right\} \subseteq \left\{U_2-\E{U_2} - 2(U_1-\E{U_1}) \le \varepsilon \right\}$. Using \eqref{eq:decomp} and the bound \eqref{eq:U-stat-bounds}  with $t =\frac{\epsilon}{2}$ for $U_2$ and $t=\frac{\epsilon}{4}$ for $U_1$ one arrives at
\newpage
\begin{align*}
\P\left(  \MMDshatEU{\P,\Q} -\MMDs{\P,\Q}  >\varepsilon \right)
 & \leq  \P\left( U_2-\E{U_2} >  \frac{\varepsilon}{2} \right)+ \P\left( U_1-\E{U_1} < -  \frac{\varepsilon}{4} \right) \\
& \leq e^{-\frac{  2\left\lfloor \frac{N}{2} \right\rfloor \varepsilon^2}{  2^2 (B-A)^2 } } +  e^{-\frac{  2 {N}  \varepsilon^2}{  4^2 (B-A)^2 } } = e^{-\frac{ \left\lfloor \frac{N}{2} \right\rfloor \varepsilon^2}{  2 (B-A)^2 } } +  e^{-\frac{   {N}  \varepsilon^2}{  8 (B-A)^2 } }.
\end{align*}

To establish the bound in $<-\varepsilon$, we can use a similar union bounding argument as in \eqref{eq:decomp}:
\begin{align}
    & \quad  \left\{ \MMDshatEU{\P,\Q} -\MMDs{\P,\Q}  <-\varepsilon \right\}  = \left\{  U_2-\E{U_2} - 2(U_1-\E{U_1}) <-\varepsilon  \right\}\nonumber\\
      &\stackrel{(b)}{\subseteq} \left\{ U_2-\E{U_2} <- \frac{\varepsilon}{2} \right\} \cup \left\{- 2(U_1-\E{U_1}) <- \frac{\varepsilon}{2} \right\} \label{eq:decomp2} \\
            & = \left\{ -U_2-\E{-U_2} > \frac{\varepsilon}{2} \right\} \cup \left\{- (U_1-\E{U_1}) <- \frac{\varepsilon}{4} \right\}, \nonumber
\end{align}
where the inclusion $A \subseteq B \cup C$ in (b) is equivalent to $\bar{B} \cap \bar{C} \subseteq \bar{A}$; the latter holds as $\left\{ U_2-\E{U_2} \ge -  \frac{\varepsilon}{2} \right\} \cap \left\{- 2(U_1-\E{U_1}) \ge - \frac{\varepsilon}{2} \right\} \subseteq \left\{U_2-\E{U_2} - 2(U_1-\E{U_1}) \ge- \varepsilon \right\}$.
Using \eqref{eq:decomp2} and the bound \eqref{eq:U-stat-bounds} replacing $U_1$ by $-U_1$ and $U_2$ by $-U_2$,  with $t =\frac{\epsilon}{2}$ for $U_2$ and $t=\frac{\epsilon}{4}$ for $U_1$ we arrived at
\begin{align*}
\P\left(  \MMDshatEU{\P,\Q} -\MMDs{\P,\Q}  >\varepsilon \right)  &\leq  \P\left( (-U_2)-\E{(-U_2)} >  \frac{\varepsilon}{2} \right)+ \P\left( (-U_1)-\E{(-U_1)} < -  \frac{\varepsilon}{4} \right) \\
& \leq e^{-\frac{  2\left\lfloor \frac{N}{2} \right\rfloor \varepsilon^2}{  2^2 (B-A)^2 } } +  e^{-\frac{  2 {N}  \varepsilon^2}{  4^2 (B-A)^2 } } = e^{-\frac{ \left\lfloor \frac{N}{2} \right\rfloor \varepsilon^2}{  2 (B-A)^2 } } +  e^{-\frac{   {N}  \varepsilon^2}{  8 (B-A)^2 } }.
\end{align*}
\end{proof}

\begin{proof}{(Theorem~\ref{prop:concentration_exponentiated}; concentration of $\MMDshatEU{\P,\Q}$, exponential kernel)}

$\MMDs{\P,\Q}$ is well-defined since $\E_{x \sim \P} \sqrt{\kk(x,x)} = \E_{x \sim \P} e^{\frac{bx^2}{2}} $ and  $\E_{x \sim \Q} \sqrt{\kk(x,x)}  = \E_{x \sim \Q} e^{\frac{bx^2}{2}}$ are finite by assumption \eqref{eq:unbounded_as}.

Similarly to the proof of Theorem~\ref{thm:concentration_explicit}, we write the difference   $\MMDshatEU{\P,\Q}-\MMDs{\P,\Q} $ in terms of two $U$-statistics 
 \begin{eqnarray*}
 \lefteqn{\MMDshatEU{\P,\Q} -\MMDs{\P,\Q}}\\
 &&= \underbrace{\frac{1}{N(N-1)} \sum_{\substack{i,j\in [N]\\ i\ne j}} \kk(x_i,x_j)}_{=:T_2} -  \underbrace{\E_{x, x'\sim \P}\kk(x,x')}_{\E T_2 }  -  2\Bigg[ \underbrace{\frac{1}{N} \sum_{i \in [N]} \mu_\kk(\Q)(x_i)}_{=:T_1} - \underbrace{\E_{x\sim \P} \mu_\kk(\Q)(x)}_{\E T_1} \Bigg] \\
   && = T_2-\E T_2 - 2(T_1-\E T_1 )
\end{eqnarray*} 
with $T_1$ having the kernel $h_1(x) = \mu_\kk(\Q)(x)$ and $T_2$ using the kernel $h_2(x,x')=k(x,x')$. 
We establish a concentration result on $T_1 - \E T_1$ and $T_2 - \E T_2$ separately, and combine them with the union bound:
\begin{align}
& \quad      \left\{ \MMDshatEU{\P,\Q} -\MMDs{\P,\Q}  >\varepsilon \right\}  = \left\{ T_2-\E{T_2} - 2(T_1-\E{T_1}) >\varepsilon  \right\}\nonumber\\
      &\subseteq \left\{ T_2-\E{T_2} > \frac{\varepsilon}{2} \right\} \cup \left\{- 2(T_1-\E{T_1}) > \frac{\varepsilon}{2} \right\}= \left\{ T_2-\E{T_2} > \frac{\varepsilon}{2} \right\} \cup \left\{T_1-\E{T_1} <- \frac{\varepsilon}{4} \right\}.  \label{eq:decomp_u}
\end{align}
Below let $p\ge 2$ denote a fixed constant. By assumption $\Nt := \frac{N}{2} \in \N^{*}$.
\begin{itemize}[labelindent=0em,leftmargin=1em,topsep=0em,partopsep=0cm,parsep=0cm,itemsep=2mm]
\item \textbf{Bound on $T_2$}: Let us introduce the notation $V$ for the sum of independent processes 
\begin{align*}
V(x_1, x_2,\dots ,x_N) & =   \kk(x_1,x_2)+ \kk(x_3,x_4)+ \dots +  \kk(x_{N-1},x_{N}).
\end{align*}
With this notation our target quantity $T_2$ can be rewritten \citep{pitcan2017note} as
\begin{align}
T_2 & = \frac{1}{N(N-1)} \sum_{\substack{ i,j \in [N] \\ i \ne j}} \kk(x_i,x_j) =\frac{2}{N} \left( \frac{1}{N!} \sum_{\sigma \in S_N}  V(x_{\sigma_1}, \dots ,x_{\sigma_N})\right), \label{eq:perm_U_stat}
\end{align} 
where $S_N$ denotes the set of permutations of $[N]$.  Then  for any $t>0$
 \begin{align}
   \lefteqn{\P\left( T_2 - \E T_2 > t  \right)  = \P\left(\frac{1}{N(N-1)} \sum_{i \ne j} \kk(x_i,x_j) - \E_{x,x' \sim \P }\kk(x,x') > t  \right)}  \nonumber\\\
   & \stackrel{(a)}{\le} \frac{\E{ \left|\frac{1}{N(N-1)} \sum_{i \ne j} \kk(x_i,x_j) - \E_{x,x' \sim \P }\kk(x,x')  \right|^p }}{t^p}
\stackrel{(b)}{=}   \frac{ \E{ \left\lvert\frac{2}{N} \frac{1}{N!} \sum_{\sigma \in S_N}  V(x_{\sigma_1}, \dots ,x_{\sigma_N}) - \E_{x,x' \sim \P }\kk(x,x')  \right\rvert^p }}{t^p}\nonumber\\
& \stackrel{(c)}{=}   \left( \frac{2}{Nt} \right)^p  \E{ \left\lvert \frac{1}{N!} \sum_{\sigma \in S_N} \left[  V(x_{\sigma_1}, \dots ,x_{\sigma_N}) -\frac{N}{2} \E_{x,x' \sim \P }\kk(x,x') \right]  \right\rvert^p }\nonumber\\
& \stackrel{(d)}{\leq} \left( \frac{2}{N t} \right)^p \frac{1}{N!} \sum_{\sigma \in S_N} \E{\Big|\underbrace{ V(x_{\sigma_1}, \dots ,x_{\sigma_N}) - \frac{N}{2} \E_{x,x' \sim \P }\kk(x,x') }_{  M^\sigma_{ \Nt}  } \Big|^p}.
\label{eq:U_trick}
\end{align} 
(a) comes from the  generalized Markov's inequality (Lemma \ref{lemma:gen_markov_ineq}) by choosing $\phi(x) :=|x|^p$ and $I=\R$.
In (b) we applied \eqref{eq:perm_U_stat}. Pulling out $\left(\frac{2}{N}\right)^p$ gives (c).  
(d) follows from the Jensen inequality by applying it to the argument of the expectation with the convex function $x \mapsto |x|^p$.

Let us introduce the notation $M^\sigma_{  \Nt} = V(x_{\sigma_1}, \dots ,x_{\sigma_N}) -\frac{N}{2} \E_{x,x' \sim \P }\kk(x,x') $ in \eqref{eq:U_trick}. One can expand $ M_{ \Nt} $ as a sum of centered independent processes:
\begin{align*}
M^\sigma_{ \Nt}& = \underbrace{
 \kk(x_{\sigma_1},x_{\sigma_2})+ \kk(x_{\sigma_3},x_{\sigma_4})+ \dots +  \kk(x_{\sigma_{N-1}},x_{\sigma_N}) }_{\Nt= N/2 \text{  terms}}- \Nt \E_{x,x' \sim \P }\kk(x,x')
\\
&=
 \sum_{k\in\left[\Nt\right]}\underbrace{ \left[\kk(x_{\sigma_{2k-1}},x_{\sigma_{2k}}) - \E_{x,x' \sim \P }\kk(x,x')\right]}_{=:Y_k}.
\end{align*}
Similarly, let us denote $M^\sigma_n = \sum_{k \in [n]} Y_k, \ n  \in \left[ \Nt \right]$.
By definition $\E Y_k = 0$ for all $k\in [n]$, which implies that $M^\sigma_{n}$ is a martingale w.r.t.\ the filtration $\mathscr{F}_n=\sigma\left((Y_k)_{k\in [n]}\right)$:
\begin{align*}
\E\left(  M^\sigma_{n} \lvert \mathscr{F}_{n-1} \right)=\E(   \underbrace{ M^\sigma_{ n -1 } }_{  \mathscr{F}_{n-1}-\text{measurable}} + \underbrace{Y_{n}}_{  \text{independent from } \mathscr{F}_{n-1}} \lvert\mathscr{F}_{n-1} ) = M^\sigma_{ n -1 } + \underbrace{\E( Y_{n})}_{0}  = M^\sigma_{ n-1 }.
\end{align*}
Hence, we can apply the Burkholder's inequality (Theorem \ref{th:burkholder}) on the martingale $\left\{(M^\sigma_n,\mathscr{F}_n)\right\}_{n\in \Nt}$; it ensures the existence of a constant $C_p>0$ such that 
\begin{align}
     \E \left| M^\sigma_{\Nt} \right\rvert^p &\leq C_p \E \left( \sum_{k \in \Nt} Y_k^2 \right)^{p/2}
     = C_p \left(\Nt\right)^{p/2}\E \left(\frac{1}{\Nt} \sum_{k \in \Nt} Y_k^2 \right)^{p/2}   \stackrel{(a)}{\le} C_p \left(\Nt\right)^{p/2}\hspace*{-1cm}\underbrace{\E \left( \frac{1}{\Nt} \sum_{k \in \left[\Nt\right]} \rvert Y_k \rvert^p \right)}_{=:m_p=\E_{x,x' \sim \P } \left| \kk(x,x') - \E_{x,x' \sim \P } \kk(x,x') \right|^p}\hspace*{-1cm},
     \label{eq:m_p-def}
\end{align}
where in (a) we applied the Jensen inequality to the argument of the expectation with the convex function $x \mapsto x^{p/2}$. Moreover, $m_p$ is finite since by assumption \eqref{eq:unbounded_as} with $\lambda = bp$, $\E_{x \sim \P } \lvert\kk(x,x) \rvert^p  = \E_{x\sim \P} e^{p b x^2} < \infty$ and $\E_{x,x' \sim \P } \left| \kk(x,x')\right|^p \leq \sqrt{\E_{x \sim \P } \lvert\kk(x,x) \rvert^p \E_{x' \sim \P } \lvert\kk(x',x') \rvert^p}$. 
By \eqref{eq:m_p-def} we arrive at
\begin{align*}
\E{ \left\lvert \sum_{k\in \Nt}  \left[ \kk(x_{\sigma_k},x_{\sigma_{k+1}}) -   \E_{x,x' \sim \P }\kk(x,x')\right] \right\rvert^p  } =\E \left\lvert M^\sigma_{\Nt} \right\rvert^p & \leq
C_p m_p  \left( \Nt \right)^{p/2}
\end{align*}
which combined with \eqref{eq:U_trick} gives that for any $t>0$
\begin{align}
\P\left( T_2 - \E T_2 > t  \right) &  \leq\left( \frac{2}{N t} \right)^p \frac{1}{N!} \sum_{\sigma \in S_N} C_p m_p  \left( \frac{N}{2} \right)^{p/2} = C_p m_p \left( \frac{2}{N t^2} \right)^{p/2}.\label{eq:bound_T2}
\end{align}
Note: The same bound holds for $-t$
\begin{align}
\P\left( T_2 - \E T_2 <- t  \right) &  \leq C_p m_p \left( \frac{2}{N t^2} \right)^{p/2}, \label{eq:bound_T2b}
\end{align}
 by changing $\kk$ to $-\kk$ in all the previous steps.
  
\item \tb{Bound on $T_1$}: Applying the generalized Markov's inequality (Lemma~\ref{lemma:gen_markov_ineq}) with $\phi(x) :=|x|^p$, one can bound the probability $\P(T_1 - \E T_1 >t)$ in terms of $\E|T_1 - \E T_1|^p$ for any $t>0$ as
\begin{align}
\P(T_1 - \E T_1 >t) &= \P\left( \frac{1}{N} \sum_{n \in [N]}  \mu_\kk(\Q)(x_n)-\E_{x\sim \P} \mu_\kk(\Q)(x) >t \right)   \leq \frac{\E \left\lvert  \frac{1}{N} \sum_{n \in [N]}  \mu_\kk(\Q)(x_n)-\E_{x\sim \P} \mu_\kk(\Q)(x) \right\rvert^p}{t^p} \nonumber\\
& = \frac{1}{(N t)^p} \E \Big| \underbrace{ \sum_{n \in [N]} \left[  \mu_\kk(\Q)(x_n)-\E_{x\sim \P} \mu_\kk(\Q)(x) \right] }_{=:S_N} \Big|^p \label{eq:markov_t1}.
\end{align}
$S_N$ is a sum of centered independent random variables and $T_1 - \E T_1 = \frac{1}{N} S_N$. By introducing the notation $Z_k =  \mu_\kk(\Q)(x_k)-\E_{x\sim \P} \mu_\kk(\Q)(x)$, $S_n = \sum_{k \in [n]} Z_k$ is a martingale w.r.t.\ the filtration $\mathscr{F}_n:=\sigma\left( (Z_k)_{k\in [n]}\right)$. Hence, one can apply the Burkholder's inequality (Theorem \ref{th:burkholder}) on $\{(S_n,\mathscr{F}_n)\}_{n\in[N]}$: it ensures the existence of a constant $C_p>0$ such that 
\begin{align}
     \E \left\lvert S_N \right\rvert^p &\leq C_p \E \left( \sum_{n \in [N]} Z_n^2 \right)^{p/2} 
     = C_p N^{\frac{p}{2}}\E \left(\frac{1}{N} \sum_{n \in [N]} Z_n^2 \right)^{p/2} 
     \stackrel{(a)}{\le}   C_p N^{\frac{p}{2}} \E \left(\frac{1}{N}\sum_{n\in [N]} |Z_n|^p\right)
     = C_p N^{\frac{p}{2}} \E |Z_N|^p\nonumber \\
     &= C_p N^{p/2}  \underbrace{ \E_{x'\sim \P} \lvert  \mu_\kk(\Q)(x')-\E_{x\sim \P} \mu_\kk(\Q)(x) \rvert^p }_{m'_p}, \label{eq:m_p_bound}
\end{align}
where in (a) we applied the Jensen inequality with the convex function $\phi(x):=x^{p/2}$. Let us show the finiteness of  $m'_p$.

\emph{Proof (finiteness of $m'_p$):}
\begin{itemize}[labelindent=0em,leftmargin=1em,topsep=0em,partopsep=0cm,parsep=0cm,itemsep=2mm]
    \item Let us first notice that assumption \eqref{eq:unbounded_as} (i.e., $\E_{x \sim \P} e^{ \lambda x^2} < \infty$ and $\E_{x \sim \Q} e^{ \lambda x^2} < \infty$ for all $\lambda \in \R$)
    implies that 
    \begin{align}
    \label{eq:unbounded_as_sup}
        \E_{x \sim \P, y \sim \Q} e^{ \lambda x y} < \infty  \quad \forall \lambda \in  {\Rpp}.
    \end{align}
    Indeed, taking $\lambda' \in {\Rpp}$ and using the inequality $xy \leq \frac{x^2+y^2}{2}$  for any $x, y \in \R$,  one gets 
    \begin{align}
    \E_{x \sim \P, y \sim \Q} e^{ \lambda' x y} &\leq \E_{x \sim \P, y \sim \Q} e^{ \frac{\lambda'}{2}\left( x^2+ y^2\right)}. \label{eq:means}     
    \end{align}
    By the independence of $x\sim\P$ and $y\sim\Q$, the r.h.s.\ of \eqref{eq:means} equals to  {$\E_{x \sim \P} e^{ \frac{\lambda'}{2}x^2 } \E_{y \sim \Q} e^{\frac{\lambda'}{2}y^2}$} which is finite by using  \eqref{eq:unbounded_as} with $\lambda = \frac{\lambda'}{2}$.
 \item 
        Let us show that $m'_p \leq 2^p \E_{x\sim \P,y \sim \Q} e^{ p b x y}$.  Indeed,
        \begin{align}
        m'_p  &=  \E_{x'\sim \P} \lvert  \mu_\kk(\Q)(x')-\E_{x\sim \P} \mu_\kk(\Q)(x) \rvert^p  
        \stackrel{(a)}{\le} 2^{p-1}  \E_{x'\sim \P}  \left(\lvert  \mu_\kk(\Q)(x') \rvert^p +   \lvert  \E_{x\sim \P} \mu_\kk(\Q)(x) \rvert^p   \right) \nonumber\\
        &\stackrel{(b)}{\le}  2^p  \E_{x'\sim \P} \lvert  \mu_\kk(\Q)(x') \rvert^p
        \stackrel{(c)}{=}  2^p\E_{x'\sim \P} \lvert \E_{ x \sim \Q}  \kk(x,x') \rvert^p  
        \stackrel{(d)}{\le } 2^p  \E_{x'\sim \P}  \E_{ x \sim \Q} \lvert \kk(x,x') \rvert^p             \stackrel{(e)}{=} 2^p \E_{x'\sim \P,x \sim \Q} e^{ p b x x'},  \label{eq:m'_p-bound}
        \end{align}
        where (a) follows from the convexity inequality $\lvert a+b \rvert^p \leq 2^{p-1}(\lvert a\rvert^p+\lvert b\rvert^p)$ for any $a,b \in \R, \ p\geq 1$ and the linearity of the integral, in (b) we applied the Jensen inequality with the convex function $\phi(x):=x^{p}$, in (c) the  definition of $\mu_\kk(\Q)(x)$ was used, (d) follows from the Jensen inequality, (e) is implied by the fact that $\kk(x,x')  = e^{b x x'}$. Applying  \eqref{eq:unbounded_as_sup} with $\lambda = b p>0$ (as $b>0$ and $p\ge 2$) implies that $\E_{x'\sim \P,x \sim \Q} e^{ p b x x'}<\infty$ which guarantees the finiteness of $m'_p$ by \eqref{eq:m'_p-bound}.
\end{itemize}

Substituting the bound \eqref{eq:m_p_bound} to \eqref{eq:markov_t1}, one gets that for any $t>0$
\begin{align}
\P(T_1 - \E T_1 >t) & \le \frac{1}{(Nt)^p} \E \left\lvert S_N \right\rvert^p \leq C_p m'_p \frac{1}{(N t^2)^{p/2}}. \label{eq:bound_T1}
\end{align}
Note: the same bound holds for the deviation below with $-t$
\begin{align}
\P(T_1 - \E T_1 <-t) & \le C_p m'_p \frac{1}{(N t^2)^{p/2}} \label{eq:bound_T1b}
\end{align}
by changing $T_1$ to $-T_1$ in the reasoning above.
\end{itemize}

Using in \eqref{eq:decomp_u} the bound \eqref{eq:bound_T1b} for $T_1$ with $t =\frac{\epsilon}{4}$ and the bound \eqref{eq:bound_T2} for $T_2$ with $t=\frac{\epsilon}{2}$ one gets
\begin{eqnarray*}
       \lefteqn{\P \left(\MMDshatEU{\P,\Q} -\MMDs{\P,\Q}  >\varepsilon \right) \le 
             \P\left( T_2-\E{T_2} > \frac{\varepsilon}{2} \right)+ \P \left(T_1-\E{T_1} <- \frac{\varepsilon}{4} \right)}\\
             && \le C_p m_p \underbrace{\left(\frac{2^2 \times 2}{N\varepsilon^2}\right)^{\frac{p}{2}}}_{\frac{(2\sqrt{2})^p}{\varepsilon^p N^{p/2}}} + C_p m'_p \frac{4^p}{\varepsilon^pN^{p/2}}    
            \le \frac{1}{\varepsilon^p N^{p/2}} \underbrace{\left[C_p m_p (2\sqrt{2})^p + C_p m'_p4^p\right]}_{=:C=:C_{p,\P,k}}.
\end{eqnarray*}
The lower deviation bound with $-\varepsilon$ follows by using the bound \eqref{eq:bound_T2b} for $T_2$ with $t=\frac{\varepsilon}{2}$ and the bound \eqref{eq:bound_T1} for $T_1$ with $t=\frac{\varepsilon}{4}$ 
\begin{align*}
       \P\left( \MMDshatEU{\P,\Q} -\MMDs{\P,\Q}  <-\varepsilon \right) & \le \P \left( T_2-\E{T_2} <- \frac{\varepsilon}{2} \right) + \P \left(T_1-\E{T_1} > \frac{\varepsilon}{4} \right) \le \frac{C_{p,\P,k}}{\varepsilon^p N^{p/2}}.
\end{align*}
\end{proof}

\subsection{Proof of Convergence of the V-statistic based Semi-Explicit MMD Estimator}\label{ssec:cv_VMMD}

\begin{proof}{(Convergence of $\MMDhatEV{\P,\Q}$)}

To understand the convergence behavior of $\MMDhatEV{\P,\Q}$,  let us start by considering the convergence of $\MMDshatEV{\P,\Q}$.

\paragraph{Convergence of $\MMDshatEV{\P,\Q}$:} 
Let us rewrite $\MMDshatEV{\P,\Q} -   \MMDs{\P,\Q}$ in terms of $\MMDshatEU{\P,\Q} -   \MMDs{\P,\Q}$.  
By the definition of  $\MMDshatEV{\P,\Q}$ and $\MMDshatEU{\P_N,\Q}$ [see \eqref{eq:MMD-hat-gen0:exp}-\eqref{eq:MMD-hat-gen:exp}], the two  estimators only differ in their first terms which we denote as
\begin{align*}
    T_1^V&:=\frac{1}{N^2}\sum_{{i,j\in [N]}}\kk\left(x_i,x_j\right), & 
    T_1^U &:=\frac{1}{N(N-1)}\sum_{\substack{i,j\in [N]\\ i\ne j}}\kk\left(x_i,x_j\right).
\end{align*}
These two terms are closely related; let us write $T_1^V$ in terms of $T_1^U$
\begin{align*}
T_1^V& = \frac{1}{N^2} \left( \sum_{\substack{i,j\in [N]\\ i\ne j}} \kk\left(x_i,x_j\right) + \sum_{i \in [N]}  \kk\left(x_i,x_i\right) \right)   = \underbrace{  \frac{N(N-1)}{N^2} }_{1-\frac{1}{N}} \underbrace{ \left( \frac{1}{N(N-1)} \sum_{\substack{i,j\in [N]\\ i\ne j}} \kk\left(x_i,x_j\right) \right)  }_{T_1^U} +\frac{1}{N^2} \sum_{i \in [N]}  \kk\left(x_i,x_i\right)
\end{align*}
which means that $T_1^V = \left(1-\frac{1}{N} \right) T_1^U + \frac{1}{N^2} \sum_{i \in [N]}  \kk\left(x_i,x_i\right)$. Denoting the second and third  common terms  of $\MMDshatEV{\P,\Q} $ and $\MMDshatEU{\P,\Q}$ by $T_2 := \E_{y \sim \Q}  \mu_\kk(\Q)(y)$ and $T_3: = -  2 \frac{\sum_{\substack{i\in [N]}} \mu_\kk(\Q)(x_i)}{N}$, we hence have
\begin{align*}
   \lefteqn{\MMDshatEV{\P,\Q} -   \MMDs{\P,\Q}
    = T_1^V  + T_2 +T_3  -   \MMDs{\P,\Q}}\\
    &= \left(1-\frac{1}{N} \right) T_1^U + T_2 +T_3 -   \MMDs{\P,\Q} + \frac{1}{N^2} \sum_{i \in [N]}  \kk\left(x_i,x_i\right)    \\
    &=  \left(1-\frac{1}{N} \right) \underbrace{ \left[  T_1^U + T_2 +T_3 -   \MMDs{\P,\Q} \right]}_{ \MMDshatEU{\P,\Q} -   \MMDs{\P,\Q} } + \frac{T_2}{N} + \frac{T_3}{N} - \frac{ \MMDs{\P,\Q}}{N}+ \frac{1}{N^2} \sum_{i \in [N]}  \kk\left(x_i,x_i\right).
\end{align*}
This implies that 
\begin{align}
     \MMDshatEV{\P,\Q} -   \MMDs{\P,\Q} &= \left(1-\frac{1}{N} \right)  \left[ \MMDshatEU{\P,\Q} -   \MMDs{\P,\Q}\right] + o_{a.s}\left(\frac{1}{\sqrt{N}}\right) \label{eq:MMDUMMDV}
\end{align}
as $T_2$ and $T_3$ are constants, and $ \frac{1}{N} \sum_{i \in [N]}  \kk\left(x_i,x_i\right) $ converge to a constant by the law of large numbers.  

Since $\MMDshatEU{\P,\Q} -   \MMDs{\P,\Q} = \mathcal{O}_{a.s} \left(\frac{1}{\sqrt{N}}\right)$,  
\eqref{eq:MMDUMMDV} means that $\MMDshatEV{\P,\Q}-  \MMDs{\P,\Q} = \mathcal{O}_{a.s.} \left(\frac{1}{\sqrt{N}}\right)$ also holds.

\paragraph{Convergence of $\MMDhatEV{\P,\Q}$} 
    Throughout the proof, we assume that $\MMD{\P,\Q}>0$. 
    In this case, 
\begin{align*}
\mathcal{O}_{a.s} \left(\frac{1}{\sqrt{N}}\right) &\stackrel{(*)}{=} \left| \MMDshatEV{\P,\Q} -\MMDs{\P,\Q} \right|\\ 
       & =\left| \MMDhatEV{\P,\Q} -\MMD{\P,\Q}\right| \Big[ \underbrace{\MMDhatEV{\P,\Q}}_{\ge 0 }+\underbrace{\MMD{\P,\Q} }_{> 0 }\Big] \\
       & \stackrel{}{\ge}   \left\lvert \MMDhatEV{\P,\Q} -\MMD{\P,\Q}\right\lvert \MMD{\P,\Q}  
\end{align*}
one gets that $\left| \MMDhatEV{\P,\Q} -\MMD{\P,\Q} \right| =\mathcal{O}_{a.s}\left(\frac{1}{\sqrt{N}}\right)$ by using in $(*)$ the previously established convergence $\MMDshatEV{\P,\Q}-  \MMDs{\P,\Q} = \mathcal{O}_{a.s.} \left(\frac{1}{\sqrt{N}}\right)$.


\end{proof}

\subsection{Proof of the Minimax Rate for the Unbounded Exponential Kernel}\label{ssec:MMD_minimax}

\begin{proof}{(Theorem~\ref{th:minimax_unbounded})}
Let $D = (x_n)_{n\in [N]} \iid \P \in \mathcal{P}$ and let $\widehat{\text{MMD}}_N$ denote any  estimator of $\MMD{\P,\Q}$ based on $D$. We are interested in the worst-case error (among all $\P,\Q \in \mathcal{P}$) of the best estimator $\widehat{\text{MMD}}_N$, in other words our target quantity is
\begin{align*}
\inf_{ \widehat{\text{MMD}}_N } \sup_{ \P,\Q \in \mathcal{P}} \P^N \left( \lvert \widehat{\text{MMD}}_N -  \MMD{\P,\Q} \lvert \geq s   \right),   \quad s >0,
\end{align*}
where $\P^N$ denotes the $N$-times product measure of $\P$.
Particularly, our goal is to show that $s = \frac{c}{\sqrt{N}}$ is a possible (hence optimal) rate, with some finite constant $c>0$.
Let us define a parameteric class of distributions $\mP_{\Theta}$, domain $\X$, and functional $F$
\begin{align}
     \mP_{\Theta}  &:=  \left\{[\mathcal{N}\left(m, \sigma^2\right)]^N\,:\, (m, \sigma) \in \Theta \right\}, & \Theta &:= \left\{ \left(m, \sigma\right) \in \R \times \left(0, \frac{1}{\sqrt{b}} \right)  \right\}, &
      \X &= \R^N, 
     & F(\theta)& = \MMD{ \P_\theta, \Q}
\end{align}
which we will use to invoke Theorem~\ref{th:minimax_general}. Here $\P_\theta := \mathcal{N}\left(m_\P, \sigma^2\right)$ where $\theta:=(m_\P, \sigma)\in \Theta$ and a fixed $\Q = \mathcal{N}\left(m_\Q, \sigma^2\right)$ is taken with $\theta_\Q = ( m_\Q, \sigma) \in \Theta$.\footnote{Notice that the variance parameter of $\P_\theta$ and $\Q_{\theta_\Q}$ are chosen to be identical.} First, let us notice that $\mP_{\Theta} \subset \mP$ since $\E_{x \sim \P} \sqrt{\kk(x,x)} < \infty$  means that $\sigma^2 < \frac{1}{b}$\footnote{A standard calculation shows that if $\P = \cN(m,\sigma)$, $\E_{x \sim \P} \sqrt{\kk(x,x)}  = e^{\frac{m^2}{1-b\sigma^2}} \frac{1}{ \sqrt{1-\sigma^2 b}}$ which is finite iff $\sigma^2< 1/b$.}. Using this inclusion one gets the following lower bound (which translated to a lower bound on the target quantity by taking the infimum over $\widehat{\text{MMD}}_N$)
\begin{align}
  \sup_{ \P,\Q \in \mathcal{P}} \P^N \left( \rvert \widehat{\text{MMD}}_N -  \MMD{\P,\Q} \lvert \geq s   \right) &\geq  \sup_{ \theta,\theta_\Q \in\Theta} \P_\theta^N \left(\lvert \widehat{\text{MMD}}_N -  \MMD{\P_{\theta},\Q_{\theta_\Q}} \lvert \geq s   \right) \nonumber \\
  &\geq  \sup_{ \theta \in \Theta} \P_\theta^N \left(\lvert \widehat{\text{MMD}}_N -  \MMD{\P_{\theta},\Q_{\theta_\Q}} \lvert \geq s   \right), \quad \forall
 \theta_\Q \in \Theta, \label{eq:inf_proba_bound}
\end{align}
which means that for any fixed $\theta_\Q \in \Theta$ we are in the realm of Theorem \ref{th:minimax_general}. To
apply the theorem, one needs (i) an upper bound on $D_{\text{KL}}\left(\Q_{\theta_\Q}^{\otimes N},\P_{\theta}^{ \otimes N}\right)$, and (ii) a lower bound on $\lvert F(\theta)-F(\theta_\Q)\rvert$. This is what we compute in the following. 
 
\begin{itemize}[labelindent=0em,leftmargin=1em,topsep=0.1em,partopsep=0cm,parsep=0cm,itemsep=2mm]
     \item \tb{Upper bound on }$D_{\text{KL}}\left(\Q_{\theta_\Q}^{\otimes N},\P_{\theta}^{ \otimes N}\right)$:
      Let $p$ and $q$ denote the pdf of $\P_{\theta}$ and $\Q_{\theta_\Q}$. Then the Kullback-Leibler divergence can be computed as 
       \begin{align}
            \lefteqn{D_{\text{KL}}\left( \Q_{\theta_\Q}^{\otimes N},\P_\theta^{ \otimes N}  \right)
             = \int_{ \R^N} \log\left(  \frac{ \prod_{n\in[N] } q(x_n) }{ \prod_{n\in[N] } p(x_n) } \right)   \prod_{j\in[N] } q(x_j) \  \d x_1 \dots \d x_N} \nonumber\\
            & = \sum_{n \in [N]}  \underbrace{\int_{\R^N}  \log\left(  \frac{q(x_n) }{ p(x_n) } \right)  \prod_{j\in[N] } q(x_j) \  \d x_1 \dots \d x_N}_{\underbrace{\int_{\R}    \log\left(  \frac{q(x_n) }{ p(x_n) } \right)  \d q(x_n)}_{D_{\text{KL}}\left(\Q_{\theta_\Q},\P_{\theta} \right)}\prod_{j\in[ N], j \ne n }  \underbrace{\int_{ \R}  q(x_j)   \d x_j}_{=1}}
            =  \sum_{n \in [N]}  D_{\text{KL}}\left(\Q_{\theta_\Q},\P_{\theta} \right) \stackrel{(a)}{=}  N \frac{(m_\P-m_\Q)^2}{2 \sigma^2} = \underbrace{\frac{a^2}{2 \sigma^2}}_{=:\alpha}, \label{eq:bound:KL}
    \end{align}
    where in (a) we used Lemma~\ref{lemma:explicit_KL}, and in (b) we assumed that 
    \begin{align}
         m_\P &= m_\Q + \frac{a}{\sqrt{N}} \label{eq:mP-mQ}
    \end{align}
    for some $a >0$.
     \item \tb{Lower bound on $\lvert F(\theta)-F(\theta_{\Q})\rvert$}: Since $F(\theta_\Q) = \MMD{ \Q, \Q} = 0$, it is sufficient  to compute $F(\theta) = \MMD{\P_{\theta},\Q_{\theta_\Q}}$. By  Lemma 1,
     one has 
     \begin{align}
            [F(\theta)]^2 =   \frac{1 }{\sqrt{1-\cbs^2}}   \left[ e^{ \frac{b m_\P^2}{1-\cbs} } +  e^{ \frac{b m_\Q^2}{1-\cbs} } - 2 e^{\frac{  b c \left( m_\P^2 + m_\Q^2 \right) +2 b m_\P m_\Q  }{ 2  ( 1-\cbs^2)  } } \right]
            \label{eq:F_theta}
     \end{align}
     where $\cbs=b\sigma^2$. We are going to show that
     \begin{align}
            \left[\MMD{\P_{\theta},\Q_{\theta_\Q}}\right] ^2  \ge\frac{(2 K)^2}{N} \label{eq:sMMD}
     \end{align} 
        for some constant $K>0$. Let $x =  \frac{a}{\sqrt{N}}$, in other words $m_\P = m_\Q + x$ (in accordance with \eqref{eq:mP-mQ}); we are going to rewrite the squared MMD in \eqref{eq:sMMD} as a function of $x=m_\P - m_\Q$. To do so we will apply a  Taylor expansion of the squared MMD around $x=0$. By introducing the notation 
        \begin{align*}
            f_1(x) &:= e^{ \frac{b (m_\Q+x)^2}{1-\cbs} } = e^{ \frac{b m_\P^2}{1-\cbs} }, &  f_2(x) &:= e^{  \frac{  2 b (1+ c)  x m_\Q + b c x^2  }{ 2  \left( 1-\cbs^2\right)  } },
        \end{align*}
        our target quantity writes as 
        \begin{align}\label{eq:MMD_f}
            \left[\MMD{\P_{\theta},\Q_{\theta_\Q}}\right]^2 &=    \frac{1}{\sqrt{1-\cbs^2}}   \left[ f_1\left( \frac{a}{\sqrt{N}} \right)+  f_1(0) - 2
            f_1(0) f_2\left( \frac{a}{\sqrt{N}}\right) \right].
        \end{align}
        Indeed, by substituting $m_\P = m_\Q +x$ in the third term of \eqref{eq:F_theta}, one gets
        \begin{align*}
            e^{\frac{  b c \left( m_\P^2 + m_\Q^2 \right) +2 b m_\P m_\Q  }{ 2  ( 1-\cbs^2)  } } &= 
            e^{\frac{  b c \left[ \left( m_\Q +x \right)^2 + m_\Q^2 \right] +2 b \left( m_\Q +x \right)m_\Q  }{ 2  ( 1-\cbs^2)  } }   =\underbrace{e^{\frac{  2 b c m_\Q^2 +2 b m_\Q^2 }{ 2  ( 1-\cbs^2)  } }}_{e^{\frac{2b(c+1)m_{\Q}^2}{2(1-\cbs)(1+c)}}} e^{  \frac{  b c \left( 2 x m_\Q + x^2 \right) +2 b x m_\Q  }{ 2  ( 1-\cbs^2)  } } = \underbrace{ e^{\frac{   b  m_\Q^2  }{  1-\cbs  } } }_{f_1(0)} \underbrace{ e^{  \frac{  2 b (1+ c)  x m_\Q + b c x^2  }{ 2  \left( 1-\cbs^2\right)  } } }_{f_2(x)} .
        \end{align*}
       Let us first form the second-order Taylor expansion of $f_1$ and $f_2$ around $x=0$; for this approximation the derivatives are 
       \begin{align*}
            f'_1(x) & = f_1(x) \left[ \frac{2b}{1-\cbs} (m_\Q+x) \right], & f''_1(x)  &= f_1(x) \left( \frac{2b}{1-\cbs}  + \left[ \frac{2b}{1-\cbs} (m_\Q+x) \right]^2 \right), \\
            f'_2(x) &= f_2(x) \left( \frac{b m_\Q}{1-\cbs} + \frac{b c x}{1-\cbs^2}  \right),  & 
            f''_2(x) &= f_2(x) \left[ \frac{b c }{1-\cbs^2} +  \left( \frac{b m_\Q}{1-\cbs} + \frac{b c x}{1-\cbs^2}  \right)^2 \right],
        \end{align*}
        which means that for $x=0$ one has
        \begin{align*}
        f_1'(0)& =f_1(0) \frac{2b m_\Q }{1-\cbs} , & f''_1(0)  &= f_1(0) \left[ \frac{2b}{1-\cbs}  + \left( \frac{2b}{1-\cbs} m_\Q \right)^2 \right], \\
        f'_2(0) &= \underbrace{f_2(0)}_{=1} \frac{b m_\Q}{1-\cbs} , & 
        f''_2(0) &=  \underbrace{f_2(0)}_{=1} \left[ \frac{b c }{1-\cbs^2} +  \left( \frac{b m_\Q}{1-\cbs}  \right)^2 \right].
        \end{align*}
    Consequently, the 2nd-order Taylor expansion of $f_1$ and $f_2$ takes the form 
    \begin{align*}
            f_1\left( \frac{a}{\sqrt{N}} \right) &=   f_1(0)  + f_1(0) \frac{2b m_\Q }{1-\cbs}  \frac{a}{\sqrt{N}} +  f_1(0) \left[ \frac{2b}{1-\cbs}  + \left( \frac{2b}{1-\cbs} m_\Q \right)^2 \right] \frac{a^2}{2 {N}} + \smallO \left(  \frac{a^2}{2 {N}}  \right) ,\\
            f_2\left( \frac{a}{\sqrt{N}} \right)& =  1 + \frac{b m_\Q }{1-\cbs}  \frac{a}{\sqrt{N}} +  \left[ \frac{b c }{1-\cbs^2} +  \left( \frac{b m_\Q}{1-\cbs}  \right)^2 \right] \frac{a^2}{2 {N}} + \smallO \left(  \frac{a^2}{2 {N}}  \right).
    \end{align*}
        Using these expansions in \eqref{eq:MMD_f}, the $f_1(0) $ and $ f_1(0) \frac{2b m_\Q }{1-\cbs}  \frac{a}{\sqrt{N}} $ terms simplify and one gets
        \begin{align*}
             \left[\MMD{\P_{\theta},\Q_{\theta_\Q}}\right]^2 
             &= \frac{1 }{\sqrt{1-\cbs^2}} f_1(0)\Bigg(  \underbrace{\left[\frac{2b}{1-\cbs}+\left(\frac{2bm_\Q}{1-\cbs}\right)^2\right] -2 \left[\frac{\cbs b}{1-\cbs^2} + \left(\frac{bm_\Q}{1-\cbs}\right)^2\right]}_{=  \underbrace{\frac{2b}{1-\cbs}}_{=\frac{2b(1+c)}{1-\cbs^2}} - \frac{2\cbs b}{1-\cbs^2} + 2\left(\frac{bm_\Q}{1-\cbs}\right)^2} \Bigg)  \frac{a^2}{2 {N}} + \smallO \left(  \frac{a^2}{2 {N}}  \right)\\
             & = \frac{1 }{\sqrt{1-\cbs^2}} f_1(0) \left[\frac{2b}{1-\cbs^2} + 2\left(\frac{bm_\Q}{1-\cbs}\right)^2 \right]\frac{a^2}{2 {N}} + \smallO \left(  \frac{a^2}{2 {N}}  \right).
        \end{align*}
        This means that the term in $\frac{a^2}{2 {N}} $ will be smaller than the remaining term $ \smallO \left(  \frac{a^2}{2 {N}} \right)$ for  large enough $N$. Hence there exists a constant $K>0$ such that
        \begin{align*}
            \left[\MMD{\P_{\theta},\Q_{\theta_\Q}}\right] ^2  \ge \frac{(2K)^2 }{ {N}}. 
        \end{align*} 
        Hence we have that
        \begin{align}
        \lvert F(\theta) - F(\theta_\Q) \rvert  = \lvert F(\theta)  \rvert \geq \frac{2K}{\sqrt{N}} :=2 s. \label{eq:bound:F}
        \end{align}
\end{itemize}    
By using the derived bounds \eqref{eq:bound:KL} and \eqref{eq:bound:F},  Theorem~\ref{th:minimax_general} can be applied with $\alpha = \frac{a^2}{2 \sigma^2}$ and $s= \frac{K}{\sqrt{N}}$, and the bound \eqref{eq:inf_proba_bound} implies that
\begin{align*}
\inf_{ \widehat{\text{MMD}}_N } \sup_{ \P,\Q \in \mathcal{P}} \P \left( \rvert \widehat{\text{MMD}}_N -  \MMD{\P,\Q} \lvert \geq \frac{K}{\sqrt{N}}   \right)& \geq \max\left( \frac{e^{- \frac{a^2}{2 \sigma^2} }}{4},   \frac{1-\sqrt{  \frac{a^2}{2 \sigma^2} }}{2}  \right).
\end{align*}
Since the bound is valid for any $\sigma$ for which $\sigma^2  <\frac{1}{b}$, by continuity one can also take the limit $\sigma^2  =\frac{1}{b}$ for which the lower bound is maximized and writes as  
\begin{align*}
 \max\left( \frac{e^{- \frac{a^2 b}{2} }}{4},   \frac{1-\sqrt{  \frac{a^2 b}{2}}}{2}  \right) .
\end{align*}
\end{proof}

\section{ANALYTICAL MEAN EMBEDDINGS}\label{sec:two}
In Section~\ref{ssec:analytical_meanemb_supp} we state our novel analytical mean-embedding results, followed their proofs (Section~\ref{ssec:mean_emb_proof}). Auxiliary results related to Lemma~\ref{lemma:mean_emb_beta_matern} are given in Section~\ref{ssec:mean_emb_technical}.

\subsection{Our Results on Analytical Mean Embeddings}\label{ssec:analytical_meanemb_supp} 

Below we present our results on analytical mean-embeddings obtained for  the (Gaussian-exponentiated, Gaussian) and (Mat{\'e}rn, beta) kernel-distribution pairs.
\begin{lemma}[Mean embedding:  Gaussian-exponentiated  kernel - Gaussian target]\label{lemma:mean_emb_gaussian_gaussian_exp}
Let the target distribution be Gaussian $q(x)=\frac{1}{ \sqrt{2 \pi \sigma^2}} \mathrm {e}^{-{\frac {\left(x-m\right)^2}{2 \sigma^2 }}}$,  
the kernel be Gaussian-exponentiated  $\kk(x,y) = e^{-a(x-y)^2 + b xy}$ where  $m\in \R$, $\sigma \in \Rpp$ $ a\ge 0, b\ge 0$. Then the mean embedding 
$\mu_\kk(\Q)$ can be computed analytically as 
\begin{align}
\mu_\kk(\Q)(x) &=\frac{e^{ - \frac{a (x-m)^2}{ 1+2 a \sigma^2} + \frac{2 b m x + b(b+4a)\sigma^2 x^2}{2(1+2 a \sigma^2)}}}{\sqrt{1+2a \sigma^2}}. \label{eq:mean-embedding:Gaussian-target,Gaussian-exponentiated-kernel}
\end{align}
\end{lemma}

\begin{lemma}[Mean embedding: Mat{\'e}rn kernel - beta target]\label{lemma:mean_emb_beta_matern}
Let the target distribution be beta $q(x) ={\frac{1}{B (\alpha ,\beta )}} x^{\alpha -1}(1-x)^{\beta -1} \I_{[0,1]}(x) $ with $\alpha\in \Rpp$, $\beta\in \Rpp$, and let 
the kernel be Mat{\'e}rn with half-integer $\nu$ ($\nu = p + \frac{1}{2}$, $p\in \N$), $\sigma_0 \in \Rpp$, $\sigma \in \Rpp$  
\begin{align*}
\kk(x,y) = \sigma_0^2 e^{-{\frac {{\sqrt {2p+1}}|x-y|}{\sigma }} }{\frac {p!}{(2p)!}}\sum _{i=0}^{p}{\frac {(p+i)!}{i!(p-i)!}}\left({\frac {2{\sqrt {2p+1}}|x-y|}{\sigma }}\right)^{p-i}.
\end{align*}
Then the mean embedding $\mu_\kk(\Q)$ can be analytically computed as
\begin{align*}
\mu_\kk(\Q)(x)& = \frac{\sigma_0^2}{B(\alpha,\beta) } \frac{p!}{(2p)!} \sum _{i=0}^{p}{\frac {(p+i)!}{i!(p-i)!}} \left({\frac {2{\sqrt {2p+1}}}{\sigma }}\right)^{p-i} \times \\
& \quad \sum_{k=0}^{p-i}  {p-i \choose k}  x^k  \bigg[   (-1)^{p-i-k} e^{-{\frac {{\sqrt {2p+1}}x}{\sigma }}} E_1^{\frac {{\sqrt {2p+1}}}{\sigma }}\left( (0\vee x) \wedge 1,p-i-k+\alpha-1,\beta-1 \right)  \\
&\hspace*{2.5cm} + (-1)^k e^{{\frac {{\sqrt {2p+1}}x}{\sigma }}} E_2^{\frac {{\sqrt {2p+1}}}{\sigma }}\left( (0\vee x) \wedge 1,p-i-k+\alpha-1,\beta-1 \right) \bigg],
\end{align*}
where for $a,b > - 1$, 
$E_1^{\lambda}$ and $E_2^{\lambda}$ are defined as
\begin{align*}
E^\lambda_1(z,a,b) & = \int_0^{z}  y^a (1-y)^b e^{\lambda y}\d y,  \quad E^\lambda_2(z,a,b) =\int_{z}^1  y^a (1-y)^b e^{-\lambda y}\d y
\end{align*}
and 
 can be evaluated using Lemma~\ref{lemma:E_seq_sum}.
\end{lemma}

\subsection{Proofs of Analytical Mean Embeddings}\label{ssec:mean_emb_proof}

Below we provide the proofs of our analytical mean-embedding results.
\begin{proof}{(Lemma~\ref{lemma:mean_emb_gaussian_gaussian_exp}; mean embedding:  Gaussian target - Gaussian-exponentiated kernel)}
Our target quantity is 

\begin{align*}
\mu_\kk(\Q)(x) &= \int_\R \frac{1}{ \sqrt{2 \pi \sigma^2}}  {e}^{-{\frac {\left(y-m\right)^2}{2 \sigma^2 }}} e^{-a(x-y)^2 + b xy} \d y.
\end{align*}
Completing the square, one gets
\begin{eqnarray*}
\lefteqn{\frac {\left(y-m\right)^2}{2 \sigma^2 }+a(x-y)^2 - b xy  = \frac {\left(y-m\right)^2 + 2 \sigma^2 a(x-y)^2 - 2 \sigma^2b xy}{2 \sigma^2 }} \\
&&=\frac{\left(1+2a\sigma^2\right) y^2 - 2\left(m+2 a \sigma^2 x + b \sigma^2 x\right) y + 2 a \sigma^2 x^2 + m^2 }{2 \sigma^2}  \\
&& = \underbrace{ \frac{ (1+2a\sigma^2)}{2 \sigma^2} }_{=:\frac{1}{2 {\sigma^*}^2} } \bigg( y^2 -  2 \underbrace{\frac{m+2 a \sigma^2 x + b \sigma^2 x}{1+2a\sigma^2}   }_{=:m^*} y \bigg) + \frac{2 a \sigma^2 x^2 + m^2}{2 \sigma^2} \\
&& = \frac{1}{2 {\sigma^*}^2} \left[ \left( y - m^* \right)^2 -{m^*}^2\right]+ \frac{2 a \sigma^2 x^2 + m^2}{2 \sigma^2}\\
&& =  \frac{1}{2 {\sigma^*}^2}  \left( y - m^* \right)^2 \underbrace{ - \frac{(m+2 a \sigma^2 x + b \sigma^2 x)^2}{2 \sigma^2(1+2a\sigma^2)} + \frac{2 a \sigma^2 x^2 + m^2}{2 \sigma^2} }_{=:-c^*}.
\end{eqnarray*}

Bringing the two terms in $c^*$ to common denominator, after simplification on arrives at 
\begin{align*}
c^* &
= \frac{\left(m+2 a \sigma^2 x + b \sigma^2 x\right)^2 - \left(2 a \sigma^2 x^2 + m^2\right)\left(1+2a \sigma^2\right)}{2 \sigma^2\left(1+2a\sigma^2\right)} = \frac{4 a \sigma^2 m x - 2 a \sigma^2(m^2 + x^2) +  b \sigma^2 x \left( b \sigma^2 x + 4 a \sigma^2 x  + 2 m \right) }{2 \sigma^2(1+2a\sigma^2)}\\
& =  \frac{   - 2 a \sigma^2  (m-x)^2 + b \sigma^2 x \left( b \sigma^2 x + 4 a \sigma^2 x  + 2 m \right) }{2 \sigma^2\left(1+2a\sigma^2\right)}  = - \frac{a (x-m)^2}{ 1+2 a \sigma^2}  + \frac{ 2 bm x + b \sigma^2(b + 4a) x^2}{ 2(1+2a\sigma^2) }.
\end{align*}

This means that our target quantity can be rewritten as
\begin{align*}
     \mu_\kk(\Q)(x) &= \int_\R \frac{1}{ \sqrt{2 \pi \sigma^2}} e^{-\frac{1}{2 {\sigma^*}^2 }  \left( y - m^* \right)^2 + c^*} \d y
     = \frac{\sigma^*}{\sigma} e^{c^*} \underbrace{\int_\R \frac{1}{ \sqrt{2 \pi (\sigma^*)^2}} e^{-\frac{1}{2 {\sigma^*}^2 }  \left( y - m^* \right)^2} \d y}_{=1} 
   = \frac{e^{ - \frac{a (x-m)^2}{ 1+2 a \sigma^2} + \frac{2 b m x + b(b+4a)\sigma^2 x^2}{2(1+2 a \sigma^2)}}}{\sqrt{1+2a \sigma^2}}.
\end{align*}
\end{proof}

\begin{proof}{(Lemma~\ref{lemma:mean_emb_beta_matern}; mean embedding: beta target - Mat{\'e}rn kernel)}
Our target quantity is 
\begin{align*}
& \quad \mu_\kk(\Q)(x) \\
&  = \frac{\sigma_0^2}{B(\alpha,\beta) }  \int_{0}^1 e^{-{\frac {{\sqrt {2p+1}}|x-y|}{\sigma }} } {\frac {p!}{(2p)!}}\sum _{i=0}^{p}{\frac {(p+i)!}{i!(p-i)!}} \left({\frac {2{\sqrt {2p+1}}|x-y|}{\sigma }}\right)^{p-i}   y^{\alpha-1} (1-y)^{\beta-1} \I_{y \in [0,1] }  \d y \\
&\stackrel{(*)}{=}  \frac{\sigma_0^2}{B(\alpha,\beta) } \frac {p!}{(2p)!} \sum _{i=0}^{p}{\frac {(p+i)!}{i!(p-i)!}} \left({\frac {2{\sqrt {2p+1}}}{\sigma }}\right)^{p-i}  \bigg[  \int_0^{(0\vee x) \wedge 1} (x-y)^{p-i}  y^{\alpha-1} (1-y)^{\beta-1} e^{-{\frac {{\sqrt {2p+1}}(x-y)}{\sigma }}} \d y 
\\
& \hspace*{7cm}
 + \int_{(0\vee x) \wedge 1}^1 (y-x)^{p-i}  y^{\alpha-1} (1-y)^{\beta-1} e^{-{\frac {{\sqrt {2p+1}}(y-x)}{\sigma }}} \d y \bigg].
\end{align*}

where in $(*)$ we applied the decomposition trick: 
for $f: \R \rightarrow \R$ and $x \in \R$,
\begin{align*}
\int_0^1 f(|x-y|)\d y =  \int_{0}^{(0 \vee x)\wedge 1}f(x-y)\d y + \int_{(0 \vee x)\wedge 1}^1 f(y-x) \d y.
\end{align*}

Using the binomial formula on $(x-y)^{p-i}= \sum_{k=0}^{p-i}  {p-i \choose k}   x^k (-y)^{p-i-k}$ and  on $(y-x)^{p-i}$, one can rewrite the two integrals as 
\begin{eqnarray*}
 \lefteqn{\int_0^{(0\vee x) \wedge 1} (x-y)^{p-i}  y^{\alpha-1} (1-y)^{\beta-1} e^{-{\frac {{\sqrt {2p+1}}(x-y)}{\sigma }}} \d y  =}\\ 
 &&=\sum_{k=0}^{p-i}  {p-i \choose k}(-1)^{p-i-k}  x^k  \int_0^{(0\vee x) \wedge 1} y^{p-i-k} y^{\alpha-1} (1-y)^{\beta-1} e^{-{\frac {{\sqrt {2p+1}}(x-y)}{\sigma }}} \d y \\
 &&=  \sum_{k=0}^{p-i}  {p-i \choose k} (-1)^{p-i-k}  x^k  e^{-{\frac {{\sqrt {2p+1}}x}{\sigma }}} \underbrace{ \int_0^{(0\vee x) \wedge 1} y^{p-i-k+\alpha-1} (1-y)^{\beta-1} e^{{\frac {{\sqrt {2p+1}}y}{\sigma }}} \d y }_{ 
  E_1^{\frac {{\sqrt {2p+1}}}{\sigma }}\left( (0\vee x) \wedge 1,p-i-k+\alpha-1,\beta-1 \right) },
\end{eqnarray*}
and 
\begin{eqnarray*}
 \lefteqn{\int_{(0\vee x) \wedge 1}^1 (y-x)^{p-i}  y^{\alpha-1} (1-y)^{\beta-1} e^{-{\frac {{\sqrt {2p+1}}(y-x)}{\sigma }}} \d y=}  \\
&&= \sum_{k=0}^{p-i}  {p-i \choose k} (- x)^k  \int_0^{(0\vee x) \wedge 1} y^{p-i-k} y^{\alpha-1} (1-y)^{\beta-1} e^{-{\frac {{\sqrt {2p+1}}(y-x)}{\sigma }}} \d y \\
 & &=\sum_{k=0}^{p-i}  {p-i \choose k} (-x)^k  e^{{\frac {{\sqrt {2p+1}}x}{\sigma }}} \underbrace{ \int_{(0\vee x) \wedge 1}^1 y^{p-i-k+\alpha-1} (1-y)^{\beta-1} e^{-{\frac {{\sqrt {2p+1}}y}{\sigma }}} \d y }_{ 
  E_2^{\frac {{\sqrt {2p+1}}}{\sigma }}\left( (0\vee x) \wedge 1,p-i-k+\alpha-1,\beta-1 \right) },
\end{eqnarray*}
Hence,
\begin{align*}
\mu_\kk(\Q)(x)& = \frac{\sigma_0^2}{B(\alpha,\beta) } \frac {p!}{(2p)!} \sum _{i=0}^{p}{\frac {(p+i)!}{i!(p-i)!}} \left({\frac {2{\sqrt {2p+1}}}{\sigma }}\right)^{p-i}\times\\
& \quad \sum_{k=0}^{p-i}  {p-i \choose k}  x^k  \bigg[   (-1)^{p-i-k} e^{-{\frac {{\sqrt {2p+1}}x}{\sigma }}} E_1\left( (0\vee x) \wedge 1,p-i-k+\alpha-1,\beta-1 \right)  \\
&\hspace*{2.5cm} + (-1)^k e^{{\frac {{\sqrt {2p+1}}x}{\sigma }}} E_2\left( (0\vee x) \wedge 1,p-i-k+\alpha-1,\beta-1 \right) \bigg].
\end{align*}
\end{proof}

\subsection{Auxiliary Results for Analytical Mean Embeddings}\label{ssec:mean_emb_technical}

In this section, we present an approximation formula on $E_1^\lambda$ and $E_2^\lambda$  which can be used to evaluate the (Mat{\'e}rn kernel, beta target) mean-embedding in
 Lemma~\ref{lemma:mean_emb_beta_matern}.

\begin{lemma}[Infinite sum formulation of $E_1^\lambda$ and $E_2^\lambda$, and truncation error control]\label{lemma:E_seq_sum}
For $z \in [0,1]$, $a>-1$, $b>-1$, let 
$E_1^\lambda(z,a,b)= \int_0^{z}  y^a (1-y)^b e^{\lambda y}\d y$ and $E_2^\lambda(z,a,b)=\int_{z}^1  y^a (1-y)^b e^{-\lambda y}\d y$. Then 
we have the following infinite sum formulation: 
            \begin{align*}
        E_1^\lambda(z,a,b) & =  \sum_{k=0}^\infty \frac{\lambda ^k}{k!} B_{\text{inc}}\left( a+k+1,b+1,z\right), \\
            E_2^\lambda(z,a,b) & = \sum_{k=0}^\infty \frac{(-\lambda)^k}{k!} \left[B\left( a+k+1, b+1\right) - B_{\text{inc}}\left( a+k+1, b+1,z\right) \right].
            \end{align*}
   Let $K \in \N$ be fixed and let us denote the truncated $E_1^\lambda$ and $E_2^\lambda$ by
\begin{align*}
E_{1}^{\lambda,tr}& = \sum_{k=0}^K \frac{\lambda^k}{k!}  B_{\text{inc} } \left( a+k+1,b+1,z\right),\\
 E_{2}^{\lambda,tr}&= \sum_{k=0}^K \frac{(-\lambda)^k}{k!} \left[{B\left( a+k+1, b+1 \right)} - { B_{\text{inc}}\left( a+k+1, b+1,z\right)} \right].
\end{align*}
Then the following bounds hold for the truncation errors
\begin{align*}
  E_{1}^\lambda - E_{1}^{\lambda,tr}& \leq  \frac{ \lambda^{K+1}e^{\lambda z} }{(K+1)!} B_{\text{inc}}\left( a+K+2,b+1,z\right) :=   \mathcal{E}^{\lambda,tr}_1 ,\\
     E_{2}^\lambda - E_{2}^{\lambda,tr}& \leq \frac{(-\lambda )^{K+1}e^{-\lambda z}}{(K+1)! }   \left[ B\left( a+K+2, b+1\right) - B_{\text{inc}}\left( a+K+2,b+1,z\right) \right] :=  \mathcal{E}^{\lambda,tr}_2.
\end{align*}

\end{lemma}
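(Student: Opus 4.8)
The plan is to expand $e^{\pm\lambda y}$ into its power series inside each integral, interchange sum and integral (justified by positivity/monotone or dominated convergence since $y^a(1-y)^b$ is integrable on $[0,1]$ for $a,b>-1$), and recognize the resulting integrals as (complete or incomplete) beta functions. Concretely, for $E_1^\lambda(z,a,b)=\int_0^z y^a(1-y)^b e^{\lambda y}\,\d y$, writing $e^{\lambda y}=\sum_{k\ge 0}\frac{\lambda^k y^k}{k!}$ gives
\begin{align*}
E_1^\lambda(z,a,b) &= \sum_{k=0}^\infty \frac{\lambda^k}{k!}\int_0^z y^{a+k}(1-y)^b\,\d y = \sum_{k=0}^\infty \frac{\lambda^k}{k!} B_{\text{inc}}(a+k+1,b+1,z),
\end{align*}
and similarly for $E_2^\lambda(z,a,b)=\int_z^1 y^a(1-y)^b e^{-\lambda y}\,\d y$, using $\int_z^1 y^{a+k}(1-y)^b\,\d y = B(a+k+1,b+1)-B_{\text{inc}}(a+k+1,b+1,z)$. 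I would note that the interchange is legitimate because $\sum_k \frac{|\lambda|^k}{k!}\int_0^1 y^{a+k}(1-y)^b\,\d y \le e^{|\lambda|}B(a+1,b+1)<\infty$ when $a+1>0$, so Fubini/Tonelli applies.

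For the truncation bounds, the idea is to write the tail as a single integral and bound the summation index crudely. For $E_1^\lambda$, the remainder is
\begin{align*}
E_1^\lambda - E_1^{\lambda,tr} &= \sum_{k=K+1}^\infty \frac{\lambda^k}{k!}B_{\text{inc}}(a+k+1,b+1,z) = \int_0^z y^a(1-y)^b\sum_{k=K+1}^\infty \frac{(\lambda y)^k}{k!}\,\d y.
\end{align*}
The plan is then to bound $\sum_{k=K+1}^\infty \frac{(\lambda y)^k}{k!}$: pulling out the first factor $\frac{(\lambda y)^{K+1}}{(K+1)!}$ and bounding the remaining series by $e^{\lambda y}\le e^{\lambda z}$ (for $y\le z$, assuming $\lambda>0$) leaves $\sum_{k=K+1}^\infty \frac{(\lambda y)^k}{k!}\le \frac{(\lambda y)^{K+1}}{(K+1)!}e^{\lambda z}\le \frac{\lambda^{K+1}}{(K+1)!}e^{\lambda z}y^{K+1}$. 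Substituting back and absorbing $y^{K+1}$ into the exponent of $y$ yields $\int_0^z y^{a+K+1}(1-y)^b\,\d y\,\cdot\,\frac{\lambda^{K+1}e^{\lambda z}}{(K+1)!} = \frac{\lambda^{K+1}e^{\lambda z}}{(K+1)!}B_{\text{inc}}(a+K+2,b+1,z)=\mathcal{E}_1^{\lambda,tr}$. The argument for $E_2^\lambda$ is the mirror image, using $e^{-\lambda y}\le e^{-\lambda z}$ for $y\ge z$ and the complement of the incomplete beta.

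The main obstacle I anticipate is not conceptual but bookkeeping: making the tail-of-exponential estimate clean and uniform in $y$ (choosing the right point to evaluate the residual exponential — $z$ rather than $1$ — so the bound is as tight as stated), and being careful that the stated bounds implicitly assume $\lambda>0$ so that monotonicity of $e^{\lambda y}$ and $e^{-\lambda y}$ on the relevant intervals goes the right way. I would also double-check the elementary inequality $\sum_{k\ge K+1}\frac{t^k}{k!}\le \frac{t^{K+1}}{(K+1)!}e^{t}$ for $t\ge 0$ (immediate from $\frac{t^{K+1+j}}{(K+1+j)!}\le \frac{t^{K+1}}{(K+1)!}\cdot\frac{t^j}{j!}$), since this is the one quantitative step the whole truncation estimate rests on.
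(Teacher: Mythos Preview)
Your proposal is correct and essentially matches the paper's proof: the series expansions and identification with (incomplete) beta functions are identical, and for the truncation error the paper invokes the Taylor--Lagrange remainder $e^{\lambda y}=\sum_{k=0}^K\frac{(\lambda y)^k}{k!}+e^{\lambda y_K}\frac{(\lambda y)^{K+1}}{(K+1)!}$ with $y_K\in(0,y)$ and then bounds $e^{\lambda y_K}\le e^{\lambda z}$, which is exactly equivalent to your direct tail estimate $\sum_{k\ge K+1}\frac{t^k}{k!}\le\frac{t^{K+1}}{(K+1)!}e^{t}$ followed by $e^{\lambda y}\le e^{\lambda z}$. Your observation that the stated bounds tacitly assume $\lambda>0$ is also apt.
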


\begin{proof}{(Lemma~\ref{lemma:E_seq_sum})}
For $z \in [0,1]$, $a>-1$ and $b>-1$, let 
\begin{align*}
E_1^\lambda(z,a,b)&= \int_0^{z}  y^a (1-y)^b e^{\lambda y}\d y, \quad E_2^\lambda(z,a,b)=\int_{z}^1  y^a (1-y)^b e^{-\lambda y}\d y. 
\end{align*}

The infinite sum formulations follow from the exponential series expansions $e^{\lambda y} = \sum_{k=0}^\infty \frac{(\lambda y)^k}{k!}$ and $e^{-\lambda y} = \sum_{k=0}^\infty \frac{(-\lambda y)^k}{k!}$:
\begin{align*}
     E_1^\lambda(z,a,b) & = \int_0^z y^a (1-y)^{b } e^{\lambda y} \d y = \int_0^z y^a (1-y)^{b} \sum_{k=0}^\infty \frac{(\lambda y)^k}{k!} \d y
 = \sum_{k=0}^\infty \frac{\lambda^k}{k!} \underbrace{\int_0^z y^{a+k} (1-y)^{b } \d y }_{ B_{\text{inc}}\left( a+k+1, b+1,z\right)},\\
 E_{2}^\lambda(z,a,b) &= \int_z^1 y^a (1-y)^{b+1} e^{-\lambda y} \d y =\sum_{k=0}^\infty \frac{(-\lambda)^k}{k!} {\int_z^1 y^{a+k} (1-y)^{b } \d y }\\
 & = \sum_{k=0}^\infty \frac{(-\lambda)^k}{k!} \Bigg[\underbrace{\int_0^1  y^{a+k} (1-y)^{b} \d y }_{B\left( a+k+1,b+1\right)} - \underbrace{ \int_0^z  y^{a+k} (1-y)^{b } \d y }_{ B_{\text{inc}}\left( a+k+1, b+1,z\right)} \Bigg].
\end{align*}

Let us now fix $K \in \N$, and truncate $E_{1}^\lambda(z,a,b)$ and $E_{2}^\lambda(z,a,b)$  to the first $K+1$ terms: 
\begin{align*}
     E_{1}^{\lambda,tr} &= \sum_{k=0}^K \frac{\lambda^k}{k!}  B_{\text{inc} } \left( a+k+1,b+1,z\right), \\
     E_{2}^{\lambda,tr} &= \sum_{k=0}^K \frac{(-\lambda)^k}{k!} \left[{B\left( a+k+1, b+1 \right)} - { B_{\text{inc}}\left( a+k+1, b+1,z\right)} \right].
\end{align*}

By the Taylor-Lagrange theorem, in case of
\begin{itemize}
     \item  $E_{1}^{\lambda}$: for any $y \in [0,z]$ there is a $y_K \in (0,y)$ such that
        \begin{align*}
            e^{\lambda y} & = \sum_{k=0}^K \frac{(\lambda y)^k}{K!} + e^{\lambda y_K} \frac{(\lambda y)^{K+1}}{(K+1)!}.
        \end{align*}
     \item $E_{2}^{\lambda}$: for any $y \in [z,1]$ there is a $y_K' \in (0,y)$ such that
        \begin{align*}
                e^{-\lambda y} & = \sum_{k=0}^K \frac{(-\lambda y)^k}{K!} + e^{-\lambda y'_K} \frac{(-\lambda y)^{K+1}}{(K+1)!}.
        \end{align*}
\end{itemize}

Hence, the truncation errors can be bounded as 
\begin{align*}
E_{1}^\lambda - E_{1}^{\lambda, tr}& = \int_0^z y^a(1-y)^{b}  e^{\lambda y_K} \frac{(\lambda y)^{K+1}}{(K+1)!} \d y \stackrel{(a)}{\le} \int_0^z y^a(1-y)^{-1/2}  e^{\lambda z} \frac{(\lambda y)^{K+1}}{(K+1)!} \d y \\
&= \frac{ \lambda^{K+1} e^{\lambda z} }{(K+1)!} \underbrace{\int_0^z y^{a+K+1} (1-y)^{b} \d y}_{B_{\text{inc}}\left( a+K+2,b+1,z\right)}, \\
E_{21}^\lambda - E_{21}^{\lambda,tr}& = \int_z^1 y^a(1-y)^{b}  e^{-\lambda y'_K} \frac{(-\lambda y)^{K+1}}{(K+1)!} \d y 
\stackrel{(b)}\le \int_z^1 y^a(1-y)^b  e^{-\lambda z}\frac{(-\lambda y)^{K+1}}{(K+1)!} \d y\\
&= \frac{(-\lambda )^{K+1}}{(K+1)!}  e^{-\lambda z}  \int_z^1 y^{a+K+1} (1-y)^{b} \d y\\
&
=  \frac{(-\lambda )^{K+1}}{(K+1)!}  e^{-\lambda z}  \left[\underbrace{\int_0^1 y^{a+K+1} (1-y)^{b} \d y}_{B\left( a+K+2, b+1 \right)} - \underbrace{\int_0^z y^{a+K+1} (1-y)^{b} \d y}_{B_{\text{inc}}\left( a+K+2, b+1,z\right)} \right]\\
&= \frac{(-\lambda )^{K+1}}{(K+1)!}  e^{-\lambda z} \left[ B\left( a+K+2, b+1 \right) - B_{\text{inc}}\left( a+K+2, b+1,z\right) \right],
\end{align*}
where in (a) we used that $e^{\lambda y_K} \le e^{\lambda z}$ and in (b) that $e^{-\lambda y'_K} \le e^{-\lambda z}$.
\end{proof}

\section{EXTERNAL STATEMENTS}  \label{sec:external-statements}

    \setcounter{theorem}{0}
    \renewcommand{\thetheorem}{\Alph{section}\arabic{theorem}}

    \setcounter{theorem}{0}
    \renewcommand{\thelemma}{\Alph{section}\arabic{lemma}}
This section contains external statements used in the proofs of our concentration results.
 
\begin{theorem}[Hoeffding inequality for U-statistic; \cite{hoeffding1963}, \cite{pitcan2017note}] \label{th:Hoeffding_thm}
Assume that we have $n$ i.i.d.\ samples $\{X_i\}_{i\in [n]} \sim \P$. 
Let $I_m^n$ be the set $m$-tuples chosen without repetition from $[n]$. 
Suppose that $h : \R^m \rightarrow \R$ is bounded: $a \leq h(x_{1}, \dots,x_{m}) \leq b$ for all $ (x_1,\dots, x_m)$.
We denote $m_h = \E h (X_1,\dots,X_m)$ and its U-statistic based estimator  
 $U_n  = \frac{1}{\binom{n}{m} } \sum_{(i_1, \dots,i_m) \in I_m^n} h(X_{i_1}, \dots,X_{i_m})$.
 Then, for any $\varepsilon>0$
\begin{align*}
\P\left(  U_n - m_h >  \varepsilon  \right) \leq e^{ -\frac{2 \left\lfloor \frac{n}{m}\right\rfloor \varepsilon^2}{(b-a)^2} },
\end{align*}
and the same deviation bound holds for $-\varepsilon$ below, i.e.\ $\P\left(  U_n - m_h <  -\varepsilon  \right) \leq e^{ -\frac{2 \left\lfloor \frac{n}{m}\right\rfloor \varepsilon^2}{(b-a)^2} }$.
\end{theorem}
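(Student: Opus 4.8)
The plan is to use Hoeffding's classical averaging trick, which reduces the U-statistic to an average of sums of independent, bounded random variables, to which the ordinary Chernoff bound applies. Throughout, write $k=\left\lfloor \frac{n}{m}\right\rfloor$ and set
\begin{align*}
V(x_1,\dots,x_{km}) &= \frac{1}{k}\sum_{j=1}^{k} h\left(x_{(j-1)m+1},\dots,x_{jm}\right).
\end{align*}
When $V$ is evaluated at $km$ distinct coordinates of the i.i.d.\ sample, the $k$ summands depend on disjoint blocks of the $X_i$'s and are therefore mutually independent; moreover each summand lies in $[a,b]$ and has mean $m_h$, so $V$ is the average of $k$ independent random variables in $[a,b]$ with $\E V = m_h$.

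First I would establish the decoupling identity
\begin{align*}
U_n &= \frac{1}{n!}\sum_{\sigma\in S_n} V\left(X_{\sigma(1)},\dots,X_{\sigma(km)}\right),
\end{align*}
where $S_n$ is the set of permutations of $[n]$. This is the structural heart of the argument: averaging $V$ over all $n!$ permutations makes each admissible index configuration occur with the same multiplicity, so the right-hand side collapses to the symmetric average defining $U_n$. Verifying this requires a short combinatorial count (each distinct $m$-tuple of indices appears in a fixed block the same number of times across all $\sigma$), together with a check that the normalisation $\frac{1}{\binom{n}{m}}$ of $U_n$ matches the counting.

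Next I would run the Chernoff argument. For $t>0$,
\begin{align*}
\P\left(U_n-m_h>\varepsilon\right) &\le e^{-t\varepsilon}\,\E\, e^{t(U_n-m_h)}.
\end{align*}
Using the decoupling identity and Jensen's inequality applied to the convex map $u\mapsto e^{tu}$, the exponential of the permutation average is bounded by the permutation average of the exponentials, so that
\begin{align*}
\E\,e^{t(U_n-m_h)} &\le \frac{1}{n!}\sum_{\sigma\in S_n}\E\,e^{t\left(V(X_{\sigma(1)},\dots,X_{\sigma(km)})-m_h\right)}.
\end{align*}
Each term is the moment generating function of an average of $k$ independent bounded variables; by Hoeffding's lemma applied blockwise it is at most $e^{t^2(b-a)^2/(8k)}$, uniformly in $\sigma$. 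Combining and optimising over $t$ at $t=\frac{4k\varepsilon}{(b-a)^2}$ yields the exponent $-\frac{2k\varepsilon^2}{(b-a)^2}=-\frac{2\left\lfloor n/m\right\rfloor\varepsilon^2}{(b-a)^2}$, which is the claimed bound. The lower-tail estimate follows verbatim after replacing $h$ by $-h$ (equivalently $U_n$ by $-U_n$), which maps the bounds $[a,b]$ to $[-b,-a]$ and hence leaves $(b-a)^2$ unchanged.

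The main obstacle I anticipate is the decoupling identity: one must check both that the $k$ summands inside $V$ are genuinely independent (guaranteed by the disjointness of the blocks together with the i.i.d.\ assumption) and that the permutation average reproduces exactly the $\frac{1}{\binom{n}{m}}$-normalised U-statistic rather than a rescaled version. Once this representation is in place, the remaining steps (Jensen, Hoeffding's lemma, and the optimisation in $t$) are routine.
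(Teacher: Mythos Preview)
Your proof is correct and follows exactly the classical Hoeffding argument. Note, however, that the paper does not actually prove this statement: Theorem~\ref{th:Hoeffding_thm} is listed in Section~\ref{sec:external-statements} as an external result, cited from \cite{hoeffding1963} and \cite{pitcan2017note} without proof. Your permutation-averaging representation is precisely the device those references use, and in fact the paper invokes the same identity (for the case $m=2$) in \eqref{eq:perm_U_stat} within the proof of Theorem~\ref{prop:concentration_exponentiated}, again citing \cite{pitcan2017note}. So there is nothing to compare against; your sketch simply fills in the standard argument the paper takes for granted.
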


\begin{lemma}[Generalized Markov's inequality; (2.1) in \cite{boucheron2013concentration}]
\label{lemma:gen_markov_ineq}
Let $\phi$ denote a nondecreasing and nonnegative function defined on $I \subseteq \R$ and let $Y$ denote a random variable taking values in $I$. Then Markov's inequality implies that for every $t\in I$ with $\phi(t)>0$
\begin{align*}
\P\left( Y \ge t \right) \leq \frac{\E\phi(Y)}{\phi(t)}.
\end{align*}
\end{lemma}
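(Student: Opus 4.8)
The plan is to prove this directly from the definition of the expectation using the indicator-function (Chebyshev--Markov) trick, exploiting the two hypotheses on $\phi$ separately: nonnegativity to discard an unwanted contribution, and monotonicity to bound $\phi$ from below on the relevant event.

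First I would fix $t \in I$ with $\phi(t) > 0$ and introduce the indicator random variable $\I_{\{Y \ge t\}}$. Since $\phi$ is nonnegative on all of $I$ and $Y$ takes values in $I$, the random variable $\phi(Y)$ is nonnegative almost surely, so splitting the expectation over the events $\{Y \ge t\}$ and $\{Y < t\}$ and discarding the (nonnegative) contribution of the latter yields
\begin{align*}
\E\phi(Y) \ge \E\left[\phi(Y)\,\I_{\{Y \ge t\}}\right].
\end{align*}
Next, because $\phi$ is nondecreasing, on the event $\{Y \ge t\}$ one has $\phi(Y) \ge \phi(t)$ pointwise, hence $\phi(Y)\,\I_{\{Y \ge t\}} \ge \phi(t)\,\I_{\{Y \ge t\}}$ almost surely. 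Taking expectations and using $\E\,\I_{\{Y \ge t\}} = \P(Y \ge t)$ gives
\begin{align*}
\E\left[\phi(Y)\,\I_{\{Y \ge t\}}\right] \ge \phi(t)\,\P(Y \ge t).
\end{align*}
Chaining the two displays produces $\E\phi(Y) \ge \phi(t)\,\P(Y \ge t)$, and dividing by $\phi(t) > 0$ delivers the claimed bound $\P(Y \ge t) \le \E\phi(Y)/\phi(t)$.

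There is essentially no hard step here: the whole argument is the two-line indicator estimate, and the only points requiring the stated assumptions are the use of nonnegativity of $\phi$ to justify dropping the $\{Y < t\}$ part and the use of monotonicity to replace $\phi(Y)$ by $\phi(t)$ on $\{Y \ge t\}$. The mild bookkeeping subtlety worth flagging is that $\E\phi(Y)$ could in principle be $+\infty$, in which case the inequality holds trivially; otherwise all quantities are finite and the manipulations above are valid without further integrability hypotheses.
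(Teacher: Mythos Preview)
Your proof is correct and is the standard indicator-function argument for the generalized Markov inequality. The paper does not actually provide its own proof of this lemma: it is listed under the ``External statements'' section and is simply cited from \cite{boucheron2013concentration} without proof, so there is nothing to compare against.
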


\begin{theorem}[Burkholder's inequality; Theorem~2.10 in \cite{hall2014martingale}]
\label{th:burkholder}
Assume that $\{( S_i, \cF_i)\}_{i \in[n]}$ is a martingale sequence and its filtration,  $1<p<\infty$. 
Let the associated martingale increments be denoted by $X_1=S_1$ and $X_i = S_i-S_{i-1}, 2 \leq i \leq n$. Then there exist a constant $C_p$ depending on $p$ such that 
\begin{equation*}
\E{\bigg\lvert S_n \bigg\rvert^p }\leq  C_p \E{ \bigg\lvert \sum_{i=1}^n X_i^2\bigg\rvert^{p/2} } .
\end{equation*}
\end{theorem}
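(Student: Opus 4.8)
The statement to establish is the upper half of the Burkholder--Davis--Gundy inequality, so the plan is to reproduce its classical proof. Throughout I would write $[S]_n := \sum_{i=1}^n X_i^2$ for the square bracket and $S_n^* := \max_{1\le k\le n}|S_k|$ for the maximal function, and take the convention $S_0=0$ so that $f(S_0)=0$ for $f(x)=|x|^p$. The first reduction is to observe that $|S_n|\le S_n^*$ always, while Doob's $L^p$ maximal inequality gives $\E (S_n^*)^p \le \left(\tfrac{p}{p-1}\right)^p \E|S_n|^p$ for $p>1$; hence $\E|S_n|^p$ and $\E (S_n^*)^p$ are comparable up to a $p$-dependent constant, and it suffices to bound either of them by $\E [S]_n^{p/2}$. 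I would also assume $\E|S_n|^p<\infty$, which is legitimate after a standard stopping-time truncation that I would insert at the very end.

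For the range $p\ge 2$ --- the only case actually invoked in the paper (see the proof of Theorem~\ref{prop:concentration_exponentiated}) --- I would give the elementary self-improving argument. Set $f(x)=|x|^p$, which is $C^2$ with $f''(x)=p(p-1)|x|^{p-2}$. A second-order Taylor expansion of each increment gives, for some $\xi_k$ between $S_{k-1}$ and $S_k$,
\begin{align*}
f(S_k)-f(S_{k-1}) &= f'(S_{k-1})X_k + \tfrac{1}{2}f''(\xi_k)X_k^2 .
\end{align*}
Summing over $k$ and taking expectations, the linear terms vanish because $\E[X_k\mid\cF_{k-1}]=0$ and $f'(S_{k-1})$ is $\cF_{k-1}$-measurable, leaving, via $|\xi_k|\le S_n^*$,
\begin{align*}
\E|S_n|^p &= \tfrac{p(p-1)}{2}\,\E\sum_{k=1}^n |\xi_k|^{p-2}X_k^2 \le \tfrac{p(p-1)}{2}\,\E\!\left[(S_n^*)^{p-2}[S]_n\right].
\end{align*}
Applying Hölder's inequality with conjugate exponents $\tfrac{p}{p-2}$ and $\tfrac{p}{2}$ then yields $\E\!\left[(S_n^*)^{p-2}[S]_n\right]\le \left(\E(S_n^*)^p\right)^{(p-2)/p}\left(\E[S]_n^{p/2}\right)^{2/p}$, and replacing $\E(S_n^*)^p$ by a constant times $\E|S_n|^p$ through Doob's inequality produces the self-referential bound
\begin{align*}
\E|S_n|^p &\le C_p\,\left(\E|S_n|^p\right)^{(p-2)/p}\left(\E[S]_n^{p/2}\right)^{2/p}.
\end{align*}
Dividing by the finite factor $\left(\E|S_n|^p\right)^{(p-2)/p}$ leaves $\left(\E|S_n|^p\right)^{2/p}\le C_p\left(\E[S]_n^{p/2}\right)^{2/p}$, and raising to the power $p/2$ gives $\E|S_n|^p\le C_p'\,\E[S]_n^{p/2}$, which is the claim.

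For the remaining range $1<p<2$ the Taylor argument breaks down, since $f$ is no longer $C^2$ at the origin and $|\xi_k|^{p-2}\le (S_n^*)^{p-2}$ fails; here I would instead follow Burkholder's \emph{good-$\lambda$} method. Using the stopping times $\tau=\inf\{k:|S_k|>\lambda\}$ and $\rho=\inf\{k:[S]_{k}^{1/2}>\delta\lambda\}$ one establishes a distributional inequality of the form $\P\!\left(S_n^*>2\lambda,\ [S]_n^{1/2}\le\delta\lambda\right)\le c\,\delta^2\,\P(S_n^*>\lambda)$, then integrates both sides against $p\lambda^{p-1}\,\d\lambda$ over $(0,\infty)$; choosing $\delta$ small makes the coefficient of $\E(S_n^*)^p$ on the right strictly less than one, and absorbing it on the left yields $\E(S_n^*)^p\le C_p\,\E[S]_n^{p/2}$. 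I expect this $1<p<2$ case to be the main obstacle: the delicate points are the stopping-time localization that makes the conditional estimate on $\{S_n^*>2\lambda,\ [S]_n^{1/2}\le\delta\lambda\}$ rigorous, and ensuring all integrated quantities are finite so the absorption step is legitimate. Since the result is quoted from \cite{hall2014martingale} and only the $p\ge 2$ regime is needed downstream, I would present the self-improving argument in full and sketch the good-$\lambda$ step only for completeness.
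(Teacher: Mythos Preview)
The paper does not prove this statement: Theorem~\ref{th:burkholder} sits in Section~\ref{sec:external-statements} (``External statements'') and is quoted verbatim from \cite{hall2014martingale} without any proof. There is therefore nothing in the paper to compare your argument against.

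That said, your proposal is a correct and standard route to the upper Burkholder inequality. The $p\ge 2$ argument (second-order Taylor expansion of $|x|^p$, cancellation of the linear term by the martingale property, the bound $|\xi_k|^{p-2}\le (S_n^*)^{p-2}$, H\"older with exponents $p/(p-2)$ and $p/2$, Doob's maximal inequality, and the self-improving division) is exactly the classical proof and is complete as written, modulo the truncation you mention to justify dividing by a finite quantity. Your good-$\lambda$ sketch for $1<p<2$ is also the right idea. Since the paper only invokes the result for $p\ge 2$ (in the proof of Theorem~\ref{prop:concentration_exponentiated}), your detailed treatment of that range already covers everything the paper needs.
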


\begin{theorem}[Theorem~2.2 in \cite{tsybakov08introduction}] \label{th:minimax_general}
Let $\X$ and  $\Theta$ denote two measurable spaces. Let $F: \Theta \rightarrow \R$ be a functional. Let $\mathcal{P}_\Theta = \{ \P_\theta\, : \, \theta \in \Theta \} $ be a class of probability measures on $\X$ indexed by $\Theta$. We observe the data $D$ 
distributed according  $\P_\theta \in \mathcal{P}_\Theta$  with some unknown $\theta$. The goal is to estimate $F(\theta)$. Let $\hat{F}:= \hat{F}(D)$ be an estimator of $F(\theta)$ based on $D$. 
Assume that there exist $\theta_0, \theta_1 \in \Theta$ such that $|F(\theta_0)-F(\theta_1)|\geq 2s >0$ and $D_{\text{KL}}\left(\P_{\theta_1},\P_{\theta_0} \right)\le \alpha$ with $0<\alpha <\infty$. Then
\begin{align*}
\inf_{\hat{F}} \sup_{\theta \in \Theta} \P_\theta \left( |\hat{F} - F(\theta) | \geq s \right) \geq \max\left(  \frac{e^{-\alpha}}{4},  \frac{1-\sqrt{\alpha/2}}{2} \right).
\end{align*}
Note: Typically $\X$, $D$ and $\P_{\theta}$ depend on the sample size $N$.
\end{theorem}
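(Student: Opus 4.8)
The plan is to prove this classical lower bound by the two-point (Le Cam) method, reducing the estimation problem to a binary hypothesis test between $\P_{\theta_0}$ and $\P_{\theta_1}$ and then lower bounding the testing error by the overlap of the two measures. Throughout, write $p_j := \P_{\theta_j}(|\hat{F} - F(\theta_j)| \ge s)$ for $j\in\{0,1\}$, and let $f_0 = \d\P_{\theta_0}/\d\mu$, $f_1 = \d\P_{\theta_1}/\d\mu$ be densities with respect to any common dominating measure $\mu$ (e.g.\ $\mu = \P_{\theta_0} + \P_{\theta_1}$).

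First I would reduce estimation to testing. Since the supremum over $\Theta$ dominates the average over the two points,
\[
\sup_{\theta\in\Theta}\P_\theta(|\hat{F} - F(\theta)|\ge s) \;\ge\; \tfrac12(p_0 + p_1).
\]
Given an arbitrary estimator $\hat{F}$, associate the test $\psi := \I\{\,|\hat{F} - F(\theta_1)| \le |\hat{F} - F(\theta_0)|\,\}$, i.e.\ pick the hypothesis whose functional value is closest to $\hat{F}$. The separation assumption $|F(\theta_0)-F(\theta_1)|\ge 2s$ together with the triangle inequality forces $\{\psi=1\}\subseteq\{|\hat{F} - F(\theta_0)|\ge s\}$ and $\{\psi=0\}\subseteq\{|\hat{F} - F(\theta_1)|\ge s\}$: indeed on $\{\psi=1\}$ one has $2s \le |F(\theta_0)-F(\theta_1)| \le |\hat{F} - F(\theta_0)| + |\hat{F} - F(\theta_1)| \le 2|\hat{F} - F(\theta_0)|$. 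Hence $p_0 \ge \P_{\theta_0}(\psi=1)$ and $p_1 \ge \P_{\theta_1}(\psi=0)$, so the right-hand side above is at least $\tfrac12\inf_\phi[\P_{\theta_0}(\phi=1)+\P_{\theta_1}(\phi=0)]$, where the infimum runs over all $\{0,1\}$-valued tests. Because this bound is now free of $\hat{F}$, taking $\inf_{\hat{F}}$ on the left preserves it, and the Neyman--Pearson identity for the optimal test gives $\inf_\phi[\P_{\theta_0}(\phi=1)+\P_{\theta_1}(\phi=0)] = \int (f_0\wedge f_1)\,\d\mu = 1 - \|\P_{\theta_0}-\P_{\theta_1}\|_{\text{TV}}$.

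It then remains to lower bound the overlap $\int(f_0\wedge f_1)\,\d\mu$ in terms of $\alpha \ge D_{\text{KL}}(\P_{\theta_1},\P_{\theta_0})$ in two complementary ways. For the exponential term, I would pass through the Hellinger affinity: Cauchy--Schwarz applied to $\sqrt{f_0 f_1} = \sqrt{f_0\wedge f_1}\,\sqrt{f_0\vee f_1}$ combined with the identity $f_0\wedge f_1 + f_0\vee f_1 = f_0+f_1$ yields $\int(f_0\wedge f_1)\,\d\mu \ge \tfrac12\big(\int\sqrt{f_0 f_1}\,\d\mu\big)^2$, and Jensen's inequality applied to the convex exponential gives $\int\sqrt{f_0 f_1}\,\d\mu = \E_{\P_{\theta_1}}\big[\sqrt{f_0/f_1}\big] \ge e^{-\frac12 D_{\text{KL}}(\P_{\theta_1},\P_{\theta_0})} \ge e^{-\alpha/2}$; together these give $\int(f_0\wedge f_1)\,\d\mu \ge \tfrac12 e^{-\alpha}$. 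For the other term I would instead use Pinsker's inequality $\|\P_{\theta_0}-\P_{\theta_1}\|_{\text{TV}} \le \sqrt{\tfrac12 D_{\text{KL}}(\P_{\theta_1},\P_{\theta_0})} \le \sqrt{\alpha/2}$, so that $\int(f_0\wedge f_1)\,\d\mu \ge 1-\sqrt{\alpha/2}$. Multiplying each bound by the $\tfrac12$ from the reduction and taking the larger of the two yields $\max\big(\tfrac14 e^{-\alpha},\ \tfrac12(1-\sqrt{\alpha/2})\big)$, which is exactly the claim.

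The main obstacle is the reduction step rather than the analytic estimates: one must choose the induced test carefully and verify the triangle-inequality inclusions so that the two estimation error probabilities genuinely control the two testing errors, and must check that replacing $\inf_{\hat{F}}$ by $\inf_\phi$ over all tests is a legitimate relaxation (it enlarges the feasible set, hence only lowers the bound). Among the two final inequalities, the Hellinger--Jensen chain leading to $e^{-\alpha}$ is the less routine ingredient, whereas the Pinsker route is immediate; both directions of the Kullback--Leibler argument are available since $\int\sqrt{f_0 f_1}\,\d\mu$ and $\|\cdot\|_{\text{TV}}$ are symmetric, which is what lets the hypothesis $D_{\text{KL}}(\P_{\theta_1},\P_{\theta_0})\le\alpha$ (with this particular ordering of its arguments) feed directly into both bounds.
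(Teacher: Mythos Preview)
The paper does not prove this statement: it appears in Section~\ref{sec:external-statements} (``External statements'') as a result quoted from \cite{tsybakov08introduction} and is used as a black box in the proof of Theorem~\ref{th:minimax_unbounded}. There is therefore no in-paper proof to compare against.

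That said, your argument is correct and is precisely the standard Le Cam two-point reduction that underlies Tsybakov's Theorem~2.2: reduce estimation to binary testing via the nearest-functional-value test, invoke the Neyman--Pearson identity to obtain the overlap $\int(f_0\wedge f_1)\,\d\mu$, and then bound the overlap from below in two ways---once through the Hellinger affinity combined with Jensen (giving $\tfrac14 e^{-\alpha}$) and once through Pinsker (giving $\tfrac12(1-\sqrt{\alpha/2})$). One small remark: for the Jensen step you implicitly use $\P_{\theta_1}\ll\P_{\theta_0}$ so that $\log(f_0/f_1)$ is $\P_{\theta_1}$-a.s.\ finite; this is guaranteed by $D_{\text{KL}}(\P_{\theta_1},\P_{\theta_0})<\infty$, so no gap arises.
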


\begin{lemma}[Kullback-Leibler divergence for univariate Gaussian variables; page~13 in \cite{duchi07derivations}] \label{lemma:explicit_KL} 
Let $\P = \mathcal{N}(m_\P,\sigma_\P)$, $\Q = \mathcal{N}(m_\Q,\sigma_\Q)$,  $m_\P, m_\Q \in \R$, $\sigma_\P, \sigma_\Q \in \R^{>0}$. Then
\begin{align*}
D_{\text{KL}} \left(\P, \Q \right)   &= \log \left(\frac{\sigma_\Q}{\sigma_\P}\right) + \frac{\sigma_\P^2 + (m_\Q-m_\P)^2}{2 \sigma_\Q^2} - \frac{1}{2}.
\end{align*}
\end{lemma}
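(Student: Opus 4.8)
The plan is to compute the divergence directly from its definition as an expectation of the log-likelihood ratio under $\P$. First I would write $D_{\text{KL}}(\P,\Q)=\E_{x\sim\P}\left[\log p(x)-\log q(x)\right]$, where $p$ and $q$ are the densities of $\P=\mathcal{N}(m_\P,\sigma_\P^2)$ and $\Q=\mathcal{N}(m_\Q,\sigma_\Q^2)$. Substituting the Gaussian densities and taking logarithms turns the integrand into
\begin{align*}
\log p(x)-\log q(x)=\frac{1}{2}\log\frac{\sigma_\Q^2}{\sigma_\P^2}-\frac{(x-m_\P)^2}{2\sigma_\P^2}+\frac{(x-m_\Q)^2}{2\sigma_\Q^2},
\end{align*}
so that the ratio of the two normalising constants already collapses into the $\log(\sigma_\Q/\sigma_\P)$ term of the claimed formula.

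Next I would take the expectation over $x\sim\P$ term by term, using only the first two moments of a Gaussian. The first quadratic term is immediate: $\E_{x\sim\P}(x-m_\P)^2=\sigma_\P^2$, hence $\E_{x\sim\P}\left[(x-m_\P)^2/(2\sigma_\P^2)\right]=1/2$, which produces the $-1/2$ summand. For the second quadratic term the one point that needs care is the bias--variance decomposition $\E_{x\sim\P}(x-m_\Q)^2=\E_{x\sim\P}\left[(x-m_\P)+(m_\P-m_\Q)\right]^2=\sigma_\P^2+(m_\P-m_\Q)^2$, where the cross term vanishes because $\E_{x\sim\P}(x-m_\P)=0$. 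Dividing by $2\sigma_\Q^2$ gives exactly the $\big(\sigma_\P^2+(m_\Q-m_\P)^2\big)/(2\sigma_\Q^2)$ summand.

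Collecting the three contributions and rewriting $\tfrac{1}{2}\log(\sigma_\Q^2/\sigma_\P^2)=\log(\sigma_\Q/\sigma_\P)$ yields the stated identity. The computation is elementary and presents no genuine obstacle; the only steps requiring the slightest attention are the bias--variance expansion of $\E_{x\sim\P}(x-m_\Q)^2$ and keeping track that the integral of $p$ against itself equals $1$, so that the constant term from $\log p$ and the $\sigma_\P^2$ contribution combine cleanly. Integrability is automatic since all moments of the Gaussian $\P$ are finite.
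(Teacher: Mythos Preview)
Your proof is correct and is the standard direct computation. The paper does not actually supply a proof of this lemma: it is listed under ``External statements'' with a citation to \cite{duchi07derivations}, so there is nothing to compare against beyond noting that your derivation is exactly the elementary argument one would expect.
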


\section{FURTHER EXPERIMENTAL DETAILS}\label{sec:CEMsupp}


In this section, we present the CEM algorithm and the transformation functions used to deal with constraints on the parameters.

\subsection{CEM Algorithm}
The CEM algorithm maximizing an objective function $L$ is given in Alg.~\ref{alg:CE}.
\begin{algorithm*}
   \caption{Maximization of $L$ with CEM}
   \label{alg:CE}
    \begin{algorithmic}[1]
      \STATE {\bfseries Input:} Initial value $\bm{\theta}^{(0)}$, quantile parameter $\rho>0$, smoothing parameter $\omega \in (0,1]$, sample size $S\in \N^*$, accuracy $\epsilon>0$, iterations number $T\in \N^*$.

        Initialize the iteration and the elite level: $t=1$, $\gamma_0=+\infty$.
        \REPEAT
	     \STATE Generate samples: $\{\mathbf{x}_s\}_{s\in[S]} \iid f\left(\cdot\,;\bm \theta^{(t-1)}\right)$.
	     \STATE Evaluate performance: $L_s = L(\mathbf{x}_s)$, $s\in [S]$.
	     \STATE Set level: $\gamma_t = L_{\left(\left\lceil (1-\rho)S \right\rceil  \right)}$ \COMMENT{$(1-\rho)$-quantile of $\{L(\mathbf{x}_s)\}_{s\in [S]}$}.
	     \STATE Estimate new parameter: $\tilde{\bm{\theta}}^{(t)} = \argmax_{\bm{\theta}  \in \bm{\Theta}} \frac{1}{S} \sum_{s\in [S]} \I_{\{L(\mathbf{x}_s)\ge \gamma_t\}} \log \left[f\left(\mathbf{x}_s; \bm \theta \right)\right]$ \COMMENT{MLE on the elite}.
	     \STATE Smoothing: $\bm{\theta}^{(t)} = (1-\omega) \bm{\theta}^{(t-1)} + \omega \tilde{\bm{\theta}}^{(t)}$.
      \UNTIL{$(t\le T)$ \AND ($\max(\lvert\gamma_t-\gamma_{t-1}\rvert, \lVert\bm  \theta^{(t)} - \bm \theta^{(t-1)}\rVert_\infty)\ge\epsilon$)}
      \STATE {\bfseries Output:} $\hat{\mathbf{x}} = \E_{\mathbf{x}\sim f\left(\cdot\,; \bm \theta^{(T)}\right)} \mathbf{x}$. 
    \end{algorithmic}
\end{algorithm*}

\subsection{Parameter Settings in our Experiments}

In this section, we detail how we used the CEM algorithm in our numerical experiments.
 
 \paragraph{CEM choice of hyperparameters:} In all our experiments,  we chose the pdf $ f\left(\cdot\,;\bm \theta\right)$ to be Gaussian with dimension adapted to the size of the problem,  with a mean value initialized taking into account the constraints on the parameters,  and a covariance matrix set to the identity: $\theta^{(t)} = \left( \bm \mu_t,  \bm \Sigma_t\right)$,  $\bm \mu_0\in \R^d$, $\bm \Sigma_0 = \b I_d \in \R^{d\times d}$. 
 We considered $N=150$, $\rho=0.1$ and set the maximum number of iterations to $T=30$.  A stopping criteria on the update of the elite parameter $\gamma_t$ and on the samples distribution parameter $\bm \theta^{(t)}$ was considered,  under the form $\max(\lvert\gamma_t-\gamma_{t-1}\rvert, \lVert\bm  \theta^{(t)} - \bm \theta^{(t-1)}\rVert_\infty)<\epsilon$, so that the algorithm stops when these parameters do not change much. The threshold in the stopping criteria  was conservatively set to $\epsilon=10^{-8}$. 
 
  \paragraph{CEM adaptation to constraints:} 
 Since the considered Gaussian distribution has unbounded support, and our experiments involve loss functions on specific domains of parameters,  a  transformation was applied on the samples to ensure this constraint.  Denoting by $g$ the transformation function,  the loss function  was evaluated on $\{g( \bm x_s)\}_{s\in[S]}$; we chose $g$ in CEM as follows. 
 
\begin{itemize}
             \item \tb{Experiment 2}: The index replication problem involved optimising the loss function on $ \W^3 :=  \left\{\bw \in \left(\Rnn\right)^3\,:\, \sum_{j=1}^3 w_j = 1\right\}. $ For $\bw\in \W^3$, by $w_3 = 1-w_1-w_2$, the problem can be reduced to $d=2$. This corresponds to taking $g((x_1,x_2)) = (x_1, x_2, 1-x_1-x_2)$.  We chose $\bm \mu_0=\left( \frac{1}{3},\frac{1}{3}\right)$.  The positivity constraint could also be enforced by considering the  transformation 
             \begin{equation*}
             g(\bm x) = \left( \frac{e^{x_1}}{1+\sum_{i=1}^2 e^{x_i}} ,  \frac{e^{x_2}}{1+\sum_{i=1}^2 e^{x_i}},  \frac{1}{1+\sum_{i=1}^2 e^{x_i}} \right),
             \end{equation*} 
             with $\bm \mu_0=(0,0)$ (to ensure that $g(\bm \mu_0) = \left( \frac{1}{3},\frac{1}{3}\right)$). We explored both options in our experiments. We got slightly better results  by just enforcing the sum-to-one constraint.
    \item \tb{Experiment 3}: Here, we performed the calibration of a beta distribution, for which the parameters have to be positive (or even smaller than one), and of a Gaussian and skew Gaussian distribution, for which the variance has to be positive.
    \begin{itemize}
    \item beta calibration: In our specific application of beta distribution on LGD ratios, the distribution was displaying a U-shape (see Fig.~\ref{fig:beta}), which is typical of a beta distribution with parameters $\alpha, \beta \in (0,1)$. To enforce this  constraint on $\alpha$ and  $\beta$,  the following sigmoid transformation from $\R^2$ to $(0,1)^2$ was applied:
      \begin{equation*}
             g(\bm x) = \left( \frac{e^{x_1}}{1+e^{x_1}} ,  \frac{e^{x_2}}{1+e^{x_2}} \right).
             \end{equation*}
       \item Gaussian and skew Gaussian calibration: To deal with positive variance (parameters $\sigma$ and $v$), the   transformation $x\mapsto e^{x}$ 
       mapping $\R$ to $\Rpp$ was applied on respective coordinates. 
    \end{itemize} 
\end{itemize}

\begin{figure}[h]
\vspace{.3in}
\centerline{\includegraphics[scale=0.5]{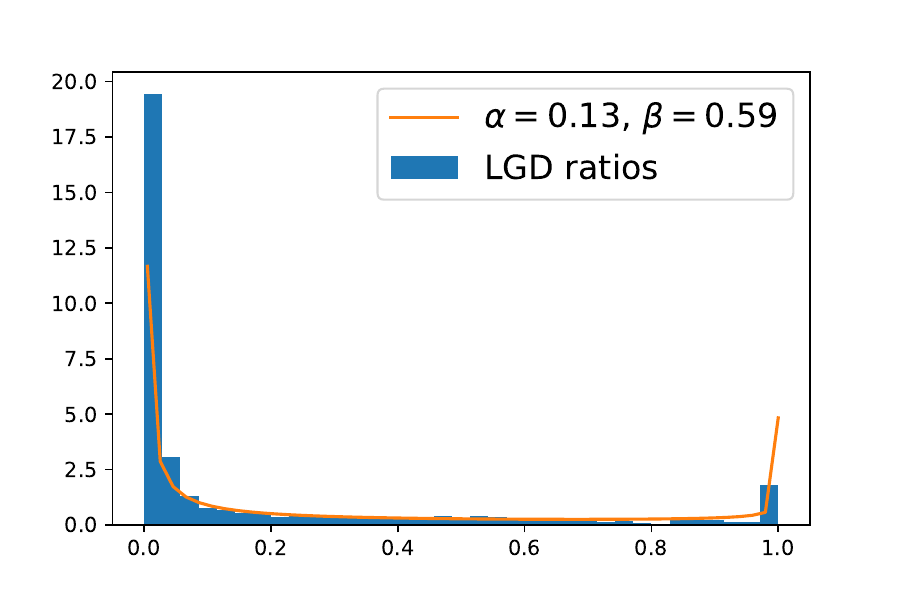}}
\vspace{.3in}
\caption{Histogram of LGD ratios and the calibrated beta distribution.}\label{fig:beta}
\end{figure}


\bibliography{BIB/collected,BIB/collected_plus}

\end{document}